\newcommand{\cC}{\mathcal{C}}
\newcommand{\cF}{\mathcal{F}}
\newcommand{\cH}{\mathcal{H}}
\newcommand{\cM}{\mathcal{M}}
\newcommand{\cN}{\mathcal{N}}
\newcommand{\cO}{\mathcal{O}}
\newcommand{\cP}{\mathcal{P}}
\newcommand{\EE}{\mathbb{E}}
\newcommand{\kl}[2]{\mathrm{KL}(#1 \| #2)}
\newcommand*{\tvsq}[2]{\mathrm{{TV}}^2(#1, #2)}
\newcommand*{\triplenorm}[1]{{\left\vert\kern-0.25ex\left\vert\kern-0.25ex\left\vert #1
    \right\vert\kern-0.25ex\right\vert\kern-0.25ex\right\vert}}
\DeclareMathOperator{\supp}{supp}
\newcommand{\R}{\mathbb{R}}
\renewcommand{\phi}{\varphi}
\newcommand{\eps}{\varepsilon}
\newcommand{\sse}{\subseteq}
\newcommand*{\E}{\mathbb E}
\newcommand*{\defeq}{\coloneqq}
\newcommand*{\rd}{\mathrm{d}}
\newcommand*{\dd}{\, \rd}
\DeclareMathOperator*{\argmax}{argmax}
\DeclareMathOperator{\OTeps}{\mathrm{OT}_\eps}
\newcommand\msf[1]{\mathsf{#1}}
\theoremstyle{plain}
\newtheorem{theorem}{Theorem}[section]
\newtheorem{proposition}[theorem]{Proposition}
\newtheorem{lemma}[theorem]{Lemma}
\newtheorem{corollary}[theorem]{Corollary}
\theoremstyle{definition}
\theoremstyle{remark}
\newtheorem{remark}[theorem]{Remark}
\title{Plug-in estimation of Schr\"odinger bridges
}
\author{Aram-Alexandre Pooladian\thanks{Center for Data Science, New York University. \tt aram-alexandre.pooladian@nyu.edu} \and Jonathan Niles-Weed\thanks{Center for Data Science and Courant Institute for Mathematical Science, New York University. \tt jnw@cims.nyu.edu}}
\begin{document}
\maketitle

\begin{abstract}
We propose a procedure for estimating the Schr{\"o}dinger bridge between two probability distributions. Unlike existing approaches, our method does not require iteratively simulating forward and backward diffusions or training neural networks to fit unknown drifts. Instead, we show that the potentials obtained from solving the static entropic optimal transport problem between the source and target samples can be modified to yield a natural plug-in estimator of the time-dependent drift that defines the bridge between two measures. Under minimal assumptions, we show that our proposal, which we call the \emph{Sinkhorn bridge}, provably estimates the Schr{\"o}dinger bridge with a rate of convergence that depends on the intrinsic dimensionality of the target measure. Our approach combines results from the areas of sampling, and theoretical and statistical entropic optimal transport. \looseness -1
\end{abstract}

\section{Introduction}
{Modern statistical learning tasks often involve not merely the comparison of two unknown probability distributions but also the estimation of \emph{transformations} from one distribution to another. Estimating such transformations is necessary when we want to generate new samples, infer trajectories, or track the evolution of particles in a dynamical system. In these applications, we want to know not only  how ``close" two distributions are, but also how to ``go" between them. }

Optimal transport theory defines objects that are well suited for both of these tasks \citep{Vil08,San15,CheNilRig24}. The $2$-Wasserstein distance is a popular tool for comparing probability distributions for data analysis in statistics \citep{carlier2016vector,chernozhukov2017monge,GhoSen22}, machine learning \citep{salimans2018improving}, and the applied sciences \citep{bunne2023learning,manole2022background}. Under suitable conditions, the two probability measures that we want to compare (say, $\mu$ and $\nu$)  induce an optimal transport map: the uniquely defined vector-valued function which acts as a transport map\footnote{$T$ is a transport map between $\mu$ and $\nu$ if given a sample $X \sim \mu$, its image under $T$ satisfies $T(X) \sim \nu$.} between $\mu$ and $\nu$ such that the distance traveled is minimal in the $L^2$ sense \citep{Bre91}. Despite being a central object in many applications, the optimal transport map is difficult to compute and suffers from poor statistical estimation guarantees in high dimensions; see \cite{hutter2021minimax, manole2021plugin,divol2022optimal}.

These drawbacks of the optimal transport map suggest that other approaches for defining a transport between two measures may often be more appropriate.
For example, \emph{flow based} or iterative approaches have recently begun to dominate in computational applications---these methods sacrifice the $L^2$-optimality of the optimal transport map to place greater emphasis on the tractability of the resulting transport. The work of \cite{chen2018neural} proposed continuous normalizing flows (CNFs), which use neural networks to model the vector field in an ordinary differential equation (ODE). This machinery was exploited by several groups simultaneously \citep{albergo2022building,lipman2022flow,liu2022flow} for the purpose of developing tractable constructions of vector fields that satisfy the \emph{continuity equation} (see \Cref{sec:dyn_conteq} for a definition), and whose flow maps therefore yield valid transports between source and target measures.

An increasingly popular alternative method for iterative transport is based on the Fokker--Planck equation (see \Cref{sec:dyn_fokplanck} for a definition). This formulation incorporates a diffusion term, and the resulting dynamics follow a \emph{stochastic} differential equation (SDE). 
Though there exist many stochastic dynamics that give rise to valid transports, a canonical role is played by the \emph{Schr{\"o}dinger bridge} (SB).
Just as the optimal transport map minimizes the $L^2$ distance in transporting between two distributions, the SB minimizes the \emph{relative entropy} of the diffusion process, and therefore has an interpretation as the ``simplest'' stochastic process bridging the two distributions---indeed, the SB originates as a \emph{Gedankenexperiment} (or ``thought experiment") of Erwin Schr{\"o}dinger in modeling the large deviations of diffusing gasses \citep{schrodinger1932theorie}. There are many equivalent formulations of the SB problem (see \cref{sec:background}), though for the purposes of transport, its most important property is that it gives rise to a pair of SDEs that interpolate between two measures $\mu$ and $\nu$:
\begin{align}
    \dd X_t = b_t^\star(X_t)\dd t + \sqrt{\eps}\dd B_t\,,& \quad X_0 \sim \mu, X_1 \sim \nu\,,\label{eq:forward_sde}\\
    \dd Y_t = d_t^\star(Y_t)\dd t + \sqrt{\eps}\dd B_t\,,& \quad Y_0 \sim \nu, Y_1 \sim \mu\,,\label{eq:backward_sde}
\end{align}
where $\eps > 0$ plays the role of thermal noise.\footnote{We assume throughout our work that the reference process is Brownian motion with volatility $\eps$; see \cref{sec:sb}.}
Concretely, \eqref{eq:forward_sde} indicates that samples from $\nu$ can be obtained by drawing samples from $\mu$ and simulating an SDE with drift $b^\star_t$, and \eqref{eq:backward_sde} shows how this process can be performed in reverse. Though these dynamics are of obvious use in generating samples, the difficulty lies in obtaining estimators for the drifts.

Nearly a century later, Schr{\"o}dinger's thought experiment has been brought to reality, having found applications in the generation of new images, protein structures, and more \citep{kawakita2022quantifying,liu2022deep,nusken2022bayesian,thornton2022riemannian,shi2022conditional,lee2024disco}. The foundation for these advances is the work of \cite{de2021diffusion},  who propose to train two neural networks to act as the forward and backward drifts, which are iteratively updated to ensure that each diffusion yields samples from the appropriate distribution. This is reminiscent of the iterative proportion fitting procedure of \citet{fortet1940resolution}, and can be interpreted as a version of Sinkhorn's matrix-scaling algorithm \citep{sinkhorn1964relationship,cuturi2013sinkhorn} on path space. 

{While the framework of \cite{de2021diffusion}  is popular from a computational perspective, it is worth emphasizing that this method is relatively costly, as it necessitates the undesirable task of simulating an SDE
at each training iteration. Moreover, despite the recent surge in applications, current methods do not come with statistical guarantees to quantify their performance.
In short, existing work leaves open the problem of developing tractable, statistically rigorous estimators for the Schr\"odinger bridge.}

\subsubsection*{Contributions}
We propose and analyze a computationally efficient estimator of the Schr{\"o}dinger bridge which we call the \emph{Sinkhorn Bridge}. Our main insight is that it is possible to estimate the \emph{time-dependent} drifts in~\eqref{eq:forward_sde} and~\eqref{eq:backward_sde} by solving a \emph{single, static} entropic optimal transport problem between samples from the source and target measures.
Our approach is to compute the potentials $(\hat{f}, \hat{g})$ obtained by running Sinkhorn's algorithm on the data $X_1,\ldots,X_m\sim\mu$ and $Y_1,\ldots,Y_n \sim \nu$ and plug these estimates into a simple formula for the drifts.
For example, in the forward case, our estimator reads
\begin{align*}
    \hat{b}_{t}(z) \defeq (1-t)^{-1}\Bigl(-z + \frac{\sum_{j=1}^n Y_{j} \exp\bigl((\hat{g}_j - \tfrac{1}{2(1-t)}\|z-Y_{j}\|^2)/\eps\bigr)}{\sum_{j=1}^n \exp\bigl((\hat{g}_j - \tfrac{1}{2(1-t)}\|z-Y_{j}\|^2)/\eps\bigr)} \Bigr)\,.
\end{align*}
See \cref{sec:schro_sink_back} for a detailed motivation for the choice of $\hat{b}_t$.
Once the estimated potential $\hat g$ is obtained from a single use of Sinkhorn's algorithm on the source and target data at the beginning of the procedure, computing $\hat b_t(z)$ for any $z \in \R^d$ and any $t \in (0, 1)$ is trivial.

We show that the solution to a discretized SDE implemented with the estimated drift $\hat{b}_{t}$ closely tracks the law of the solution to~\eqref{eq:forward_sde} on the whole interval $[0, \tau]$, for any $\tau \in [0, 1)$.
Indeed, writing $\msf P^\star_{[0, \tau]}$ for the law of the process solving~\eqref{eq:forward_sde} on $[0, \tau]$ and $\hat{\msf P}_{[0, \tau]}$ for the law of the process obtained by initializing from a fresh sample $X_0 \sim \mu$ and solving a discrete-time SDE with drift $\hat b_t$, we prove bounds on the risk
\begin{align*}
	\E [\tvsq{\hat{\msf P}_{[0, \tau]}}{\msf P^\star_{[0, \tau]}}]
\end{align*}
that imply that, for fixed $\eps > 0$ and $\tau \in [0, 1)$, the Schr\"odinger bridge can be estimated at the \emph{parametric} rate.
Moreover, though it is well known that such bounds must diverge as $\eps \to 0$ or $\tau \to 1$, we demonstrate that the rate of growth depends on the \emph{intrinsic} dimension $\msf k$ of the target measure rather than the ambient dimension $d$.
When $\msf k \ll d$, this gives strong justification for the use of the Sinkhorn Bridge estimator in high-dimensional problems.

To give a particular example in a special case, our results provide novel estimation rates for the \emph{F{\"o}llmer bridge}, an object which has also garnered interest in the machine learning community \citep{vargas2023bayesian,chen2024probabilistic,huang2024one}. In this setting, the source measure is a Dirac mass, and we suppose the target measure $\nu$ is supported on a ball of radius $R$ contained within a $\msf k$-dimensional smooth submanifold of $\R^d$. Taking the volatility level to be unity, we show that the F{\"o}llmer bridge up to time $\tau \in [0,1)$ can be estimated in total variation with precision $\epsilon_{\text{TV}}$ using $n$ samples and $N$ SDE-discretization steps, where
\begin{align*}
    n \asymp R^2(1-\tau)^{-\msf k - 2}\epsilon_{\text{TV}}^{-2}\,, \quad N \lesssim d R^4(1-\tau)^{-4}{\epsilon_{\mathrm{TV}}^{-2}}\,.
\end{align*}
As advertised, for fixed $\tau \in [0, 1)$, these bounds imply parametric scaling on the number of samples---which matches similar findings in the entropic optimal transport literature, see, e.g., \cite{stromme2023minimum}---and exhibit a ``curse of dimensionality'' only with respect to the \emph{intrinsic} dimension of the target, $\msf k$.
As our main theorem shows, these phenomena are not unique to the F\"ollmer bridge, and hold for arbitrary volatility levels and general source measures.
Moreover, by tuning $\tau$ appropriately, we show how these estimation results yield guarantees for sampling from the target measure $\nu$, see \cref{sec:follmer_sampling}.
These guarantees also suffer only from a ``curse of intrinsic dimensionality.''
Since the drifts arising from the F\"ollmer bridge can be viewed as the score of a kernel density estimator of $\nu$ with a Gaussian kernel (see~\eqref{eq:estdrift_follmer}), this benign dependence on the ambient dimension is a significant improvement over guarantees recently obtained for such estimators in the context of denoising diffusion probabilistic models~\citep{wibisono2024optimal}.
Our improved rates are due to the intimate connection between the SB problem and entropic optimal transport in which intrinsic dimensionality plays a crucial role \citep{groppe2023lower,stromme2023minimum}. We expound on this connection in the main text. 

We are not the first to notice the simple connection between the static entropic potentials and the SB drift.
\cite{finlay2020learning} first proposed to exploit this connection to simulate the SB by learning static potentials via a neural network-based implementation of Sinkhorn's algorithm; however, due to some notational inaccuracies and implementation errors, the resulting procedure was not scalable.
This work shows the theoretical soundness of their approach, with a much simpler, tractable algorithm and with rigorous statistical guarantees.

\paragraph*{Outline.}
\Cref{sec:background} contains the background information on both entropic optimal transport and the Schr{\"o}dinger bridge problem, and unifies the notation between these two problems. Our proposed estimator, the Sinkhorn bridge, is described in \Cref{sec:main}, and \Cref{sec:proof_sketch} contains our main results and proof sketches, with the technical details deferred to the appendix. Simulations are performed in \Cref{sec:numerics}.

\subsubsection*{Notation}
We denote the space of probability measures over $\R^d$ with finite second moment by $\cP_2(\R^d)$. We write $B(x,R) \sse \R^d$ to indicate the (Euclidean) ball of radius $R > 0$ centered at $x \in \R^d$. We denote the maximum of a and b by $a \vee b$. We write $a \lesssim b$ (resp. $a \asymp b$) if there exists
constants $C > 0$ (resp. $c, C > 0$ such that $a \leq Cb$ (resp. $cb \leq a \leq Cb$). 
We let $\msf{path} \defeq \cC([0,1],\R^d)$ be the space of paths with $X_t : \msf{path} \to \R^d$ given by the canonical mapping $X_t(h) = h_t$ for any $h \in \msf{path}$ and any $t \in [0,1]$. For a path measure $\msf P \in \cP(\msf{path})$ and any $t\in[0,1]$, we write $\msf P_t \defeq (X_t)_\sharp\msf P \in \cP(\R^d)$ for the $t^{\text{th}}$ marginal of $\msf P_t$. Similarly, for $s,t\in[0,1]$, we can define the joint probability measure $\msf P_{st} \defeq (X_s,X_t)_\sharp \msf P$. We write $\msf P_{[0,t]}$ for the restriction of the  $\msf P$ to $\cC([0,t],\R^d)$. Since $\msf{path}$ is a Polish space, we can define regular conditional probabilities for the law of a path given its value at time $t$, which we denote $\msf P_{|t}$. For any $s > 0$, we write $\Lambda_s := (2 \pi s)^{-d/2}$ for the normalizing constant of the density of the Gaussian distribution $\cN(0, s I)$.

\subsection{Related work}
\paragraph{On Schr{\"o}dinger bridges.} 
The connection between entropic optimal transport and the Schr{\"o}dinger bridge (SB) problem is well studied; see the comprehensive survey by \citet{leonard2013survey}. We were also inspired by the works of \citet{ripani2019convexity}, \citet{gentil2020entropic}, as well as  \cite{chen2016relation,
chen2021stochastic} (which cover these topics from the perspective of optimal control), and the more recent article by \citet{kato2024large} (which revisits the large-deviation perspective of this problem). The special case of the F{\"o}llmer bridge and its variants has been a topic of recent study in theoretical communities \citep{eldan2020stability,mikulincer2024brownian}.

Interest in computational methods for SBs has been explosive in over the last few years, see~\cite{de2021diffusion,shi2022conditional,bunne2023schrodinger,tong2023simulation,vargas2023bayesian,yim2023se, chen2024probabilistic,shi2024diffusion} for recent developments in deep learning. The works by~\cite{bernton2019schr,pavon2021data,vargas2021solving} use more traditional statistical methods to estimate the SB, with various goals in mind. For example \cite{bernton2019schr} propose a sampling scheme based on trajectory refinements using a approximate dynamic programming approach. \cite{pavon2021data} and \cite{vargas2021solving} propose methods to compute the (intermediate) density directly based on maximum likelihood-type estimators: \cite{pavon2021data} directly model the densities of interest and devise a scheme to update the weights; \cite{vargas2021solving} use Gaussian processes to model the forward and backward drifts, and update them via a maximum-likelihood type loss.

\paragraph*{On entropic optimal transport.} 
Our work is closely related to the growing literature on statistical entropic optimal transport, specifically on the developments surrounding the \emph{entropic transport map}. This object was introduced by ~\citet{pooladian2021entropic} as a computationally friendly estimator for optimal transport maps in the regime $\eps \to 0$; see also~\cite{pooladian2023minimax} for minimax estimation rates in the semi-discrete regime. When $\eps$ is fixed, the theoretical properties of the entropic maps have been analyzed~\citep{chiarini2022gradient,conforti2022weak,chewi2023entropic,conforti2023quantitative,divol2024tight} as well as their statistical properties~\citep{del2022improved,goldfeld2022statistical,goldfeld2022limit,gonzalez2022weak,rigollet2022sample,werenski2023estimation}.~\cite{nutz2021entropic,ghosal2021stability} study the stability of entropic potentials and plans in a qualitative sense under minimal regularity assumptions. Most recently,~\cite{stromme2023minimum} and~\cite{groppe2023lower} established the connections between statistical entropic optimal transport and intrinsic dimensionality (for both maps and costs).~\cite{daniels2021score}  investigates sampling using  entropic optimal transport couplings combined with neural networks. Closely related are the works by~\cite{chizat2022trajectory} and~\cite{lavenant2021towards}, which highlight the use of entropic optimal transport for trajectory inference.
A more flexible alternative to the entropic transport map was recently developed by~\cite{kassraie2024progressive}, who proposed a transport that progressively displaces the source measure to the target measure by computing a new entropic transport map at each step to approximate the~\emph{McCann interpolation} \citep{mccann1997convexity}.\looseness-1

\section{Background}\label{sec:background}
\subsection{Entropic optimal transport}\label{sec:eot}
\subsubsection{Static formulation}
Let $\mu,\nu \in \cP_{2}(\R^d)$ and fix $\eps > 0$. The \emph{entropic optimal transport} problem between $\mu$ and $\nu$ is written as 
\begin{align}\label{eq:eot}
    \OTeps(\mu,\nu) \defeq \inf_{\pi \in \Pi(\mu,\nu)} \iint \tfrac12\|x-y\|^2 \dd \pi(x,y) + \eps \kl{\pi}{\mu\otimes \nu}\,, 
\end{align}
where $\Pi(\mu,\nu) \sse \cP_2(\R^d \times \R^d)$ is the set of joint measures with left-marginal $\mu$ and right-marginal $\nu$, called the set of \emph{plans} or couplings, and where we define the Kullback--Leibler divergence as
\begin{align*}
    \kl{\pi}{\mu\otimes\nu} \defeq \int \log\Bigl(\frac{\dd\pi(x,y)}{\dd\mu(x)\dd\nu(y)}\Bigr)\dd\pi(x,y)\,,
\end{align*}
whenever $\pi$ admits a density with respect to $\mu\otimes\nu$, and ${+\infty}$ otherwise. Note that when $\eps = 0$, \eqref{eq:eot} reduces to the $2$-Wasserstein distance between $\mu$ and $\nu$ (see, e.g., \cite{Vil08,San15}).  The entropic optimal transport problem was introduced to the machine learning community by \cite{cuturi2013sinkhorn} as a numerical scheme for approximating the $2$-Wasserstein distance on the basis of samples.

\Cref{eq:eot} is a strictly convex problem, and thus admits a unique minimizer, called the \emph{optimal entropic plan}, written $\pi^\star \in \Pi(\mu,\nu)$.\footnote{Though $\pi^\star$ and the other objects discussed in this section depend on $\eps$, we will omit this dependence for the sake of readability, though we will track the dependence on $\eps$ in our bounds.} Moreover, a dual formulation also exists (see \cite{genevay2019entropy})
\begin{align}\label{eq:eot_dual}
\begin{split}
    \OTeps(\mu,\nu) &= \sup_{(f,g)\in \cF} \Phi^{\mu\nu}(f,g)\\
\end{split}
\end{align}
where $\cF = L^1(\mu)\times L^1(\nu)$ and
\begin{align}
        \Phi^{\mu\nu}(f,g) \defeq \int f \dd \mu + \int g \dd\nu - \eps \iint \Bigl(\Lambda_\eps e^{(f(x)+g(y)-\tfrac12\|x-y\|^2)/\eps} - 1\Bigr)\dd\mu(x)\dd\nu(y)\,,
\end{align}
where we recall $\Lambda_\eps = (2\pi\eps)^{-d/2}$. Solutions are guaranteed to exist when $\mu, \nu \in \cP_2(\R^d)$, and we call the dual optimizers the \emph{optimal entropic (Kantorovich) potentials}, written $(f^\star,g^\star)$.~\cite{Csi75} showed that the primal and dual optima are intimately connected through the following relationship:\footnote{The normalization factor $\Lambda_\eps$ is not typically used in the computational optimal transport literature, but it simplifies some formulas in what follows. Since the procedure we propose is invariant under translation of the optimal entropic potentials, this normalization factor does not affect either our algorithm or its analysis.} 
\begin{align}\label{eq:primal_dual}
    \dd\pi^\star(x,y) &= \Lambda_\eps \exp\Bigl(\frac{f^\star(x)+g^\star(y)-\tfrac12\|x-y\|^2}{\eps}\Bigr)\dd\mu(x)\dd\nu(y)\,.
\end{align}
Though $f^\star$ and $g^\star$ are \emph{a priori} defined almost everywhere on the support of $\mu$ and $\nu$, they can be extended to all of $\R^d$ (see \cite{mena2019statistical,nutz2021entropic}) via the optimality conditions
\begin{subequations}
\begin{align*}
& f^\star(x) = -\eps \log \Big(\Lambda_\eps \int e^{(g^\star(y) - \|x-y\|^2/2)/\eps}\dd \nu(y)\Big) \,, \\
& g^\star(y) = -\eps \log \Big( \Lambda_\eps \int e^{(f^\star(x) - \|x-y\|^2/2)/\eps}\dd \mu(x)\Big)\,.
\end{align*}
\end{subequations}
At times, it will be convenient to work with \emph{entropic Brenier potentials}, defined as 
\begin{align*}
    (\phi^\star,\psi^\star) \defeq (\tfrac12\|\cdot\|^2 - f^\star, \tfrac12\|\cdot\|^2 - g^\star)\,.
\end{align*}
Note that the gradients of the entropic Brenier potentials\footnote{Passing the gradient under the integral is permitted via dominated convergence under suitable tail conditions on $\mu$ and $\nu$.} are related to barycentric projections of the optimal entropic coupling
\begin{align*}
    \nabla\phi^\star(x) = \E_{\pi^\star}[Y|X=x]\,, \qquad \nabla\psi^\star(y) = \E_{\pi^\star}[X|Y=y]\,.
\end{align*}
See \citet[Proposition 2]{pooladian2021entropic} for a proof of this fact. 
By analogy with the unregularized optimal transport problem, these are called \emph{entropic Brenier maps}.
The following relationships can also be readily verified (see \citet[Lemma 1]{chewi2023entropic}):
\begin{align}\label{eq:enthessians}
    \nabla^2\phi^\star(x) = \eps^{-1}\text{Cov}_{\pi^\star}[Y|X=x]\,, \qquad \nabla^2\psi^\star(y) = \eps^{-1}\text{Cov}_{\pi^\star}[X|Y=y]\,.
\end{align}

\subsubsection{A dynamic formulation via the continuity equation}\label{sec:dyn_conteq}
Entropic optimal transport can also be understood from a dynamical perspective. Let $(\msf p_t)_{t \in [0,1]}$ be a family of measures in $\cP_2(\R^d)$, and let $(v_t)_{t\in[0,1]}$ be a family of vector fields. We say that the pair satisfies the \emph{continuity equation}, written $(\msf p_t,v_t) \in \mathfrak{C}$, if $\msf p_0 = \mu$, $\msf p_1 = \nu$, and, for $t \in [0,1]$,
\begin{align}\label{eq:conteq}
\partial_t \msf p_t + \nabla \cdot (v_t\msf p_t) = 0\,.
\end{align}
Solutions to \eqref{eq:conteq} are understood to hold in the weak sense (that is, with respect to suitably smooth test functions).

The continuity equation can be viewed as the analogue of the marginal constraints being satisfied (i.e., the set $\Pi(\mu,\nu)$ above): it represents both the conservation of mass and the requisite end-point constraints for the path $(\msf p_t)_{t \in [0,1]}$. With this, we can cite a clean expression of the dynamic formulation of \eqref{eq:eot} (see \citet{conforti2021formula} or \citet{chizat2020faster}) if $\mu$ and $\nu$ are absolutely continuous and have finite entropy:
\begin{align}\label{eq:dyneot}
    \OTeps(\mu,\nu) + C_0(\eps,\mu,\nu) = \!\!\inf_{(\msf p_t,v_t) \in \mathfrak{C}} \int_0^1\!\! \int \bigl(\frac{1}{2}\|v_t(x)\|^2 + \frac{\eps^2}{8}\|\nabla\log\msf p_t(x)\|^2 \bigr) \dd \msf p_t(x) \dd t\,,
\end{align}
where $ C_0(\eps,\mu,\nu) \defeq \eps\log(\Lambda_\eps) + \tfrac{\eps}{2}(H(\mu) + H(\nu))$ is an additive constant, with $H(\mu) \defeq \int \log(\!\dd\mu)\dd\mu$, similarly for $H(\nu)$.

The case $\eps = 0$ reduces to the celebrated Benamou--Brenier formulation of optimal transport~\citep{benamou2000computational}.
\subsubsection{A stochastic formulation via the Fokker--Planck equation}\label{sec:dyn_fokplanck}
Yet another formulation of the dynamic problem exists, this time based on the \emph{Fokker--Planck equation}, which is said to be satisfied by a pair $(\msf p_t,b_t) \in \mathfrak{F}$ if $\msf p_0 = \mu$, $\msf p_1 = \nu$, and, for $t \in [0,1]$,
\begin{align*}
    \partial_t \msf p_t + \nabla \cdot (b_t \msf p_t) = \frac{\eps}{2}\Delta \msf p_t\,.
\end{align*}
Then, under the same conditions as above,
\begin{align}\label{eq:dyneot2}
    \OTeps(\mu,\nu) + C_1(\eps,\mu) = \inf_{(\msf p_t,b_t)} \int_0^1 \int \frac{1}{2}\|b_t(x)\|^2 \dd \msf p_t(x) \dd t\,,
\end{align}
where $C_1(\eps,\mu) = \eps\log(\Lambda_\eps) + \eps H(\mu)$. The equivalence between the objective functions \eqref{eq:dyneot} and \eqref{eq:dyneot2}, as well as the continuity equation and Fokker--Planck equations, is classical. 
For completeness,  we provide details of these computations in \cref{app:twoformulations}. 
A key property of this equivalence is the following relationship which relates the optimizers of \eqref{eq:dyneot}, written $(\msf p_t^\star,v_t^\star)$ and \eqref{eq:dyneot2}, written $(\msf p^\star_t,b_t^\star)$:
\begin{align*}
    b_t^\star = v_t^\star + \frac{\eps}{2}\nabla \log \msf p_t^\star\,.
\end{align*}
We stress that the minimizer $\msf p_t^\star$ is the same for both \eqref{eq:dyneot} and \eqref{eq:dyneot2}.

\subsection{The Schr{\"o}dinger Bridge problem}\label{sec:sb}
We will now briefly develop the required machinery to understand the Schr{\"o}dinger bridge problem. We will largely following the expositions of  \cite{leonard2012schrodinger,leonard2013survey,ripani2019convexity,gentil2020entropic}. 

For $\eps > 0$, we let $\msf R \in \cP(\msf{path})$ denote the law of the reversible Brownian motion on $\R^d$ with volatility $\eps$, with the Lebesgue measure as the initial distribution.\footnote{The problem below remains well posed even though $\msf R$ is not a probability measure; see \citet{leonard2013survey} for complete discussions.} We write the joint distribution of the initial and final positions under $\msf R$ by $\msf R_{01}({\rm d}x, {\rm d}y) = \Lambda_\eps \exp(-\tfrac12\|x-y\|^2/\eps) \dd x \dd y$.

With the above, we arrive at Schr{\"o}dinger's bridge problem over path measures:
\begin{align}\label{eq:sb_main}
    \min_{\msf P \in \cP(\msf{path})} \eps \kl{\msf P}{\msf R} \quad \text{s.t.} \quad \msf P_0 = \mu\,, \msf P_1 = \nu\,,
\end{align}
where $\mu,\nu \in \cP_{2}(\R^d)$ and are absolutely continuous with finite entropy. Let $\msf P^\star$ be the unique solution to \eqref{eq:sb_main}, which exists as the problem is strictly convex. \cite{leonard2013survey} shows that there exist two non-negative functions $\mathfrak{f}^\star,\mathfrak{g}^\star : \R^d \to \R_+$ such that
\begin{align}\label{eq:Peps}
    \msf P^\star = \mathfrak {f}^\star(X_0) \mathfrak {g}^\star(X_1)\msf R\,,
\end{align}
where $\text{Law}(X_0) = \mu$ and $\text{Law}(X_1) = \nu$. 

A further connection can be made:  if we apply the chain-rule for the KL divergence by conditioning on times $t=0,1$, the objective function \eqref{eq:sb_main} decomposes into
\begin{align*}
    \eps \kl{\msf P}{\msf R} = \eps\kl{\msf P_{01}}{\msf R_{01}} + \eps\E_{\msf P}\kl{\msf P_{|01}}{\msf R_{|01}}\,.
\end{align*}
Under the assumption that $\mu$ and $\nu$ have finite entropy, it can be shown that the first term on the right-hand side is equivalent to the objective for the entropic optimal transport problem in~\eqref{eq:eot_dual}.
Moreover, the second term vanishes if we choose the measure $\msf P$ so that the conditional measure $\msf P_{|01}$ is the same as $\msf R_{|01}$, i.e., a Brownian bridge.
Therefore, the objective function in \eqref{eq:sb_main} is minimized when $\msf P_{01}^\star = \pi^\star$ and when $\msf P$ writes as a mixture of Brownian bridges with the distribution of initial and final points given by $\pi^\star$:
\begin{align}\label{eq:sb_first}
    \msf P^\star = \iint \msf R(\cdot|X_0=x_0, X_1=x_1) \pi^\star({\rm d}x_0,{\rm d}x_1)\,.
\end{align}

Much of the discussion above assumed that $\mu$ and $\nu$ are absolutely continuous with finite entropy; indeed, the manipulations in this section as well as in \cref{sec:dyn_conteq,sec:dyn_fokplanck} are not justified if this condition fails.
Though the finite entropy conditioned is adopted liberally in the literature on Schr\"odinger bridges, in this work we will have to consider bridges between measures that may not be absolutely continuous (for example, empirical measures).
Noting that the entropic optimal transport problem~\eqref{eq:eot} has a unique solution for \emph{any} $\mu, \nu \in \cP_2(\R^d)$, we leverage this fact to use~\eqref{eq:sb_first} as the \emph{definition} of the Schr\"odinger bridge between two probability measures: for any pair of probability distributions $\mu, \nu \in \cP_2(\R^d)$, their \emph{Schr\"odinger bridge} is the mixture of Brownian bridges given by~\eqref{eq:sb_first}, where $\pi^\star$ is the solution to the entropic optimal transport problem~\eqref{eq:eot}.
\section{Proposed estimator: The Sinkhorn bridge}\label{sec:main}
Our goal is to efficiently estimate the Schr{\"o}dinger bridge (SB) on the basis of samples. Let $\msf P^\star$ denote the SB between $\mu$ and $\nu$, and define the 
the time-marginal flow of the bridge by
\begin{align}\label{eq:ent_int}
    \msf p_{t}^\star \defeq \msf P_t^\star\,,\qquad  t\in[0,1]\,.
\end{align}
This choice of notation is deliberate: when $\mu$ and $\nu$ have finite entropy, the $t$-marginals of $\msf P^\star$ for $t \in [0, 1]$ solve the dynamic formulations~\eqref{eq:dyneot} and~\eqref{eq:dyneot2}~\citep[Proposition 4.1]{leonard2013survey}.
In the existing literature, $\msf p_{t}^\star$ is sometimes called the the \emph{entropic interpolation} between $\mu$ and $\nu$. See~\citet{leonard2012schrodinger,leonard2013survey,ripani2019convexity,gentil2020entropic} for interesting properties of entropic interpolations (for example, their relation to functional inequalities).
Our goal is to provide an estimator $\hat{\msf P}$ such that $\E[\tvsq{\hat{\msf P}_{[0, \tau]}}{\msf P^\star_{[0,\tau]}}]$ is small for all $\tau < 1$.
In particular, this marginals of our estimator $\hat{\msf P}$ are estimators $\hat{\msf p}_t$ of $\msf p_{t}^\star$ for all $t \in [0, 1)$.\footnote{For reasons that will be apparent in the next section, time $\tau=1$ must be excluded from the analysis.}

We call our estimator the \emph{Sinkhorn bridge}, and we outline its construction below. Our main observation involves revisiting some finer properties of entropic interpolations as a function of the static entropic potentials. 
Once everything is concretely expressed, a natural plug-in estimator will arise which is amenable to both computational and statistical considerations.

\subsection{From Schr{\"o}dinger to Sinkhorn and back}\label{sec:schro_sink_back}
We outline two crucial observations from which our estimator naturally arises. First, we note that $\msf p_{t}^\star$ can be explicitly expressed as the following density \citep[Theorem 3.4]{leonard2013survey}
\begin{align}\label{eq:marginal-density}
\begin{split}
    \msf p_{t}^\star({\rm d}z) &\defeq \cH_{(1-t)\eps}[\exp(g^\star/\eps)\nu](z)\cH_{t\eps}[\exp(f^\star/\eps) \mu](z)\dd z\,,
\end{split}
\end{align}
where $\cH_{s}$ is the \emph{heat semigroup}, which acts on a measure $Q$ via
\begin{align*}
    Q \mapsto \cH_{s}[Q](z) \defeq \Lambda_s\int e^{-\tfrac{1}{2s}\|x-z\|^2}{Q({\rm d}x)}\,.
\end{align*}
This expression for the marginal of distribution $\msf p_t^\star$ follows directly from \eqref{eq:sb_first}:
\begin{align*}
	\msf p_t^\star(z) & := {\iint \msf R_t(z|X_0=x_0,X_1=x_1)\pi^\star({\rm d}x_0,{\rm d}x_1)} \\
	& = \iint \cN(z|ty + (1-t)x, t(1-t)\eps)\pi^\star({\rm d}x,{\rm d}y) \\
	& = \Lambda_\eps \iint e^{((f^\star(x)+g^\star(y) - \tfrac12\|x-y\|^2)/\eps)}\cN(z|ty + (1-t)x, t(1-t)\eps)\mu({\rm d}x) \nu({\rm d}y)\\
&= \int e^{g^\star(y)/\eps}\cN(z|y,(1-t)\eps) \nu({\rm d}y)  \int e^{f^\star(x)/\eps}\cN(z|x,t\eps)\mu({\rm d}x) \\
	&= \cH_{1-t}[\exp(g^\star/\eps)\nu](z)\cH_{t}[\exp(f^\star/\eps) \mu](z)
\end{align*}
where throughout we use $\cN(z|m,\sigma^2)$ to denote the Gaussian density with mean $m$ and covariance $\sigma^2 I$, and the fourth equality follows from computing the explicit density of the product of two Gaussians.

Also, \citet[][Proposition 4.1]{leonard2013survey} shows that when $\mu$ and $\nu$ have finite entropy, the optimal drift in \eqref{eq:dyneot2} is given by
\begin{align*}
\begin{split}
b_{t}^\star(z) &= \eps \nabla \log \cH_{(1-t)\eps}[\exp(g^\star/\eps)\nu](z)\,,
\end{split} 
\end{align*}
whence the pair $(\msf p_{t}^\star,b_{t}^\star)$ satisfies the Fokker--Planck equation.
This fact implies that if $X_t$ solves
\begin{align}\label{eq:sb_drift}
    \dd X_t = b_t^\star(X_t) \dd t + \sqrt{\eps}\dd B_t\,, \quad \quad X_0 \sim \mu\,,
\end{align}
then $\msf p^*_t = \mathrm{Law}(X_t)$.
In fact, more is true: the SDE~\eqref{eq:sb_drift} give rise to a path measure, which exactly agrees with the Schr\"odinger bridge.
Though \citet{leonard2013survey} derives these facts for $\mu$ and $\nu$ with finite entropy, we show in \cref{prop:selfconsistency}, below, that they hold in more generality.

Further developing the expression for $b_t^\star$, we obtain
\begin{align}\label{eq:bteps_main}
\begin{split}
    b_{t}^\star(z) 
    &= (1-t)^{-1}\Bigl(-z + \frac{\int y e^{(g^\star(y) - \tfrac{1}{2(1-t)}\|z-y\|^2)/\eps}\dd\nu(y)}{\int e^{(g^\star(y) - \tfrac{1}{2(1-t)}\|z-y\|^2)/\eps}\dd\nu(y)} \Bigr) \\
    &\eqqcolon (1-t)^{-1}(-z 
    + \nabla\phi_{1-t}^\star(z))\,. \\
\end{split}
\end{align}
Thus, our final expression for the SDE that yields the Schr\"odinger bridge is
\begin{align}\label{eq:sde_main}
    \dd X_t = (-(1-t)^{-1}X_t + (1-t)^{-1}\nabla\phi_{1-t}^\star(X_t)) \dd t + \sqrt{\eps}\dd B_t\,.
\end{align}
{Once again, we emphasize that our choice of notation here is deliberate: the drift is expressed as a function of a particular entropic Brenier map, namely, the entropic Brenier map between $\msf p_t^\star$ and $\nu$ with regularization parameter $(1-t)\eps$.}

We summarize this collection of crucial properties in the following proposition; see~\cref{sec:proof_selfconsistency} for proofs.
We note that this result avoids the finite entropy requirements of analogous results in the literature~\citep{leonard2013survey,shi2024diffusion}.
\begin{proposition}\label{prop:selfconsistency}
Let $\pi$ be a \emph{probability} measure of the form
\begin{align}\label{eq:pi_generic}
    \pi({\rm d}x_0, {\rm d}x_1) = \Lambda_\eps\exp((f(x_0) + g(x_1) - \tfrac12\|x_0-x_1\|^2)/\eps)\mu_0({\rm d}x_0) \mu_1({\rm d}x_1)\,,
\end{align}
for \emph{any} measurable $f$ and $g$ and \emph{any} probability measures $\mu_0,\mu_1 \in \cP_2(\R^d)$. Let $\msf M$ the path measure given by a mixture of Brownian bridges with respect to \eqref{eq:pi_generic} as in \eqref{eq:sb_first}, with  $t$-marginals $\msf m_t$ for $t \in [0,1]$. The following hold:
\begin{enumerate}
    \item The path measure $\msf M$ is Markov;
    \item The marginal $\msf m_t$ is given by
    \begin{equation*}
    	\msf m_{t}({\rm d}z) = \cH_{(1-t)\eps}[\exp(g/\eps)\mu_1](z)\cH_{t\eps}[\exp(f/\eps) \mu_0](z) {\rm d}z\,;
    \end{equation*}
    \item  $\msf M$ is the law of the solution to the SDE
\begin{align*}
    \dd X_t = \eps \nabla \log \cH_{(1-t)\eps}[\exp(g/\eps) \mu_1](X_t)\dd t + \sqrt{\eps}\dd B_t\,, \quad X_0 \sim \mu_0\,;
\end{align*}
    \item The drift above can be expressed as $b_t(z) = (1-t)^{-1}(z - \nabla\phi_{1-t}(z))$, where $\nabla \phi_{1-t}$ is the entropic Brenier map between $\msf m_t$ and $\rho$ with regularization strength $(1-t)\eps$, where 
    \begin{align*}
    \rho({\rm d}x_1) = \mu_1({\rm d}x_1)\exp\bigl(g(x_1)/\eps + \log \cH_\eps[e^{f/\eps}\mu_0](x_1)\bigr)\,.
\end{align*}
    If \eqref{eq:pi_generic} is the \emph{optimal} entropic coupling between $\mu_0$ and $\mu_1$, then $\rho \equiv \mu_1$.
\end{enumerate}
\end{proposition}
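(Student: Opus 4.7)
The four claims are tightly linked, so I would first set up the joint density and then peel off the statements in the stated order.

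Step 1 (Markov property). I would begin by writing down, for any $0 < t < 1$, the explicit joint density of $(X_0, X_t, X_1)$ under $\msf M$. By definition $\msf M$ is the mixture of Brownian bridges with endpoint law $\pi$, so conditional on $(X_0, X_1) = (x_0, x_1)$ the marginal at time $t$ is Gaussian with mean $(1-t)x_0 + tx_1$ and variance $t(1-t)\eps$. Combining this with the product form of $\pi$ in~\eqref{eq:pi_generic}, and using the Gaussian identity
\begin{equation*}
\Lambda_\eps e^{-\|x_0 - x_1\|^2/(2\eps)} \cdot \cN(z \mid (1-t)x_0+tx_1, t(1-t)\eps) = \cN(z \mid x_0, t\eps)\,\cN(z\mid x_1, (1-t)\eps),
\end{equation*}
the joint density factors as $A(x_0, z)\,B(z, x_1)$ with $A(x_0,z) = e^{f(x_0)/\eps}\cN(z\mid x_0, t\eps)$ and $B(z,x_1) = e^{g(x_1)/\eps}\cN(z\mid x_1, (1-t)\eps)$. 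This factorization directly gives that $X_0$ and $X_1$ are conditionally independent given $X_t$. The same factorization extends to any finite tuple $(X_{t_1}, \ldots, X_{t_k})$ by iterating the Gaussian identity along the time grid, which yields the Markov property.

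Step 2 (marginal density). Integrating $A(x_0,z)\,B(z,x_1)$ against $\mu_0 \otimes \mu_1$ immediately yields
\begin{equation*}
\msf m_t(z) = \cH_{t\eps}[e^{f/\eps}\mu_0](z)\cdot \cH_{(1-t)\eps}[e^{g/\eps}\mu_1](z),
\end{equation*}
which is the second claim. This is just the calculation already carried out in the excerpt, now applied to arbitrary $f,g,\mu_0,\mu_1$.

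Step 3 (SDE representation). Using the Markov property from Step 1, the conditional law of the future path $(X_s)_{s\ge t}$ given $X_t = z$ is again a mixture of Brownian bridges into $\mu_1$ against the kernel $e^{g(x_1)/\eps}\cN(z\mid x_1, (1-t)\eps)$. Writing $h_t(z) := \cH_{(1-t)\eps}[e^{g/\eps}\mu_1](z)$, this means that the forward-in-time transition from $(t,z)$ is, up to the Doob $h$-transform, a Brownian motion conditioned by $h_t$. A standard Doob $h$-transform computation gives that the law of $(X_s)$ solves the SDE with drift $\eps\nabla \log h_t(X_t)$; alternatively, I would check directly that $(\msf m_t, \eps\nabla\log h_t)$ satisfies the Fokker--Planck equation by differentiating the product form in Step 2 (using the heat equation $\partial_t h_t = -\tfrac{\eps}{2}\Delta h_t$ and its counterpart for $\cH_{t\eps}[e^{f/\eps}\mu_0]$), then invoke a uniqueness result for the martingale problem to conclude. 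Either way, Step 3 follows.

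Step 4 (entropic Brenier form and $\rho\equiv\mu_1$ under optimality). Differentiating the logarithm gives
\begin{equation*}
\eps\nabla\log h_t(z) = (1-t)^{-1}\Bigl(-z + \frac{\int y\,e^{(g(y) - \|z-y\|^2/(2(1-t)))/\eps}\,\mu_1(\dd y)}{\int e^{(g(y) - \|z-y\|^2/(2(1-t)))/\eps}\,\mu_1(\dd y)}\Bigr).
\end{equation*}
I would then recognize the fraction as the barycentric projection $\E_{\tilde\pi}[Y\mid X=z]$ of an entropic coupling $\tilde\pi$ between $\msf m_t$ and some measure $\rho$ at regularization $(1-t)\eps$. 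Matching the kernel $e^{(\tilde f(z)+\tilde g(y)-\|z-y\|^2/2)/((1-t)\eps)}$ against what appears in the display forces $\tilde g \equiv g$ (up to a constant) and $\rho$ to have density proportional to $e^{g/\eps}\mu_1$ multiplied by the normalizing factor that makes the $z$-marginal equal $\msf m_t$; computing this normalizer using Step 2 gives exactly $\rho(\dd x_1) = \mu_1(\dd x_1) \exp(g(x_1)/\eps + \log \cH_\eps[e^{f/\eps}\mu_0](x_1))$. Finally, if $\pi$ is the \emph{optimal} entropic coupling between $\mu_0$ and $\mu_1$, the Schr\"odinger system implies $\cH_\eps[e^{f/\eps}\mu_0](x_1) = e^{-g(x_1)/\eps}$ $\mu_1$-a.e., which collapses $\rho$ to $\mu_1$.

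\textbf{Main obstacle.} The delicate point is Step 1: in the existing literature the Markov property of $\msf P^\star$ is typically deduced from the variational characterization under a finite-entropy assumption, and here we have to establish it purely from the product structure of~\eqref{eq:pi_generic} without any regularity on $\mu_0,\mu_1$ (in particular allowing empirical measures). The Gaussian identity above is what makes this go through cleanly; once the three-time factorization is in hand, Steps 2--4 are essentially bookkeeping and a Doob $h$-transform.
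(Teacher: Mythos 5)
Your overall plan is correct and tracks the paper's structure, with a few points of divergence worth flagging.

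\emph{Step 1.} Your two-time conditional independence via the Gaussian factorization is exactly the paper's Lemma on conditional independence. You then pass to the Markov property by extending the factorization to an arbitrary finite time grid; the paper instead fixes $t$, uses the tower property together with the fact that the Brownian bridge is itself Markov conditional on $(X_0,X_1)$, and combines this with the two-time conditional independence to get $\E[ab\mid X_t]=\E[a\mid X_t]\E[b\mid X_t]$ directly at the $\sigma$-algebra level. Both work; the paper's version avoids having to pass from finite-dimensional Markovianity to the full filtration via a monotone-class argument. Your route is perfectly serviceable but should spell out that passage.

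\emph{Step 3.} The paper does not use a Doob $h$-transform. Following F\"ollmer, it first observes that the mixture representation implies finite KL of $\msf M_{[0,t]}$ against the law of $X_0+\sqrt{\eps}B_t$ for any $t<1$, and then computes the drift as the Nelson (forward stochastic) derivative $\lim_{h\to 0}h^{-1}\E[X_{t+h}-X_t\mid X_{[0,t]}]=(1-t)^{-1}\E[X_1-X_t\mid X_t]$, evaluating the conditional expectation from the factorized density. Your primary Doob-$h$-transform route is a legitimate alternative and reaches the same drift. However, your fallback route — verify that $(\msf m_t,\eps\nabla\log h_t)$ solves Fokker--Planck and then invoke martingale-problem uniqueness — has a genuine gap: the Fokker--Planck equation only pins down the time marginals, and uniqueness of the martingale problem tells you the solution path law is unique, not that $\msf M$ \emph{is} that solution. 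To close the loop you would still have to show $\msf M$ solves the martingale problem, which is precisely what the $h$-transform or Nelson-derivative computation provides. As stated, the fallback doesn't replace the primary argument.

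\emph{Step 4.} Same idea as the paper, which explicitly exhibits the coupling between $\msf m_t$ and $\rho$ and verifies its marginals via the semigroup property. One small inaccuracy in your kernel matching: at regularization level $(1-t)\eps$ the exponent is $\tilde g(y)/((1-t)\eps)$, so matching against the factor $e^{g(y)/\eps}$ in the display forces $\tilde g=(1-t)g$ up to a constant, not $\tilde g\equiv g$. This is cosmetic and does not affect the identification of $\rho$ or the final collapse to $\mu_1$ under the Schr\"odinger system.
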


\subsection{Defining the estimator}\label{sec:defining_bridge}
In light of \eqref{eq:bteps_main}, it is easy to define an estimator on the basis of samples. Let $X_1,\ldots,X_m \sim \mu$ and $Y_1,\ldots, Y_n \sim \nu$, and let $\mu_m \defeq m^{-1}\sum_{i=1}^m \delta_{X_{i}}$, and similarly $\nu_n \defeq n^{-1}\sum_{j=1}^n \delta_{Y_j}$. Let $(\hat{f},\hat{g}) \in \R^m \times \R^n$ be the optimal entropic potentials associated with $\OTeps(\mu_m,\nu_n)$, which can be computed efficiently via Sinkhorn's algorithm \citep{cuturi2013sinkhorn,PeyCut19} with a  runtime of $O(mn/\eps)$ \citep{AltWeeRig17}. A natural plug-in estimator for the optimal drift is thus
\begin{align}\label{eq:hatbteps}
\begin{split}
\hat{b}_{t}(z) &\defeq \eps \nabla \log \cH_{(1-t)\eps}[\exp(\hat g/\eps) \nu_n] \\
    &= (1-t)^{-1}\Bigl(-z + \frac{\sum_{j=1}^n Y_j \exp\bigl((\hat{g}_j - \tfrac{1}{2(1-t)}\|z-Y_j\|^2)/\eps\bigr)}{\sum_{j=1}^n \exp\bigl((\hat{g}_j - \tfrac{1}{2(1-t)}\|z-Y_j\|^2)/\eps\bigr)} \Bigr) \\
    & =: (1-t)^{-1}(-z + \nabla\hat\phi_{1-t}(z)) \\
\end{split}
\end{align}
Further discussions on the numerical aspects of our estimator are deferred to \cref{sec:numerics}. Since we want to estimate the path given by $\msf{P}^\star$, our estimator is given by the solution to the following SDE:
\begin{align}\label{eq:disc_sde}
    \dd \hat{X}_t = (-(1-k\eta)^{-1}\hat{X}_{k\eta} + (1-k\eta)^{-1}\nabla\hat\phi_{1-k\eta}(\hat{X}_{k\eta}))\dd t + \sqrt{\eps}\dd B_t\,,
\end{align}
for $t \in [k\eta,(k+1)\eta]$, where $\eta \in (0,1)$ is some step-size, and $k$ is the iteration number. Though it is convenient to write the drift in terms of a time-varying entropic Brenier map, \eqref{eq:hatbteps} shows that for all $t \in (0, 1)$, our estimator is a simple function of the potential $\hat g$ obtained from a single call to Sinkhorn's algorithm.

\begin{remark}
	To the best of our knowledge, the idea of using static potentials to estimate the SB drift was first explored by \cite{finlay2020learning}. However, their proposal had some inconsistencies. 
For example, they assume a finite entropy condition on the source and target measures, and perform a standard Gaussian convolution on $\R^d$ instead of our proposed convolution $\cH_{(1-t)\eps}[\exp(\hat g/\eps) \nu_n]$.
	The former leads to a computationally intractable estimator, whereas, as we have shown above, the former has a simple form that is trivial to compute.
\end{remark}

\begin{remark}
An alternative approach to computing the Schr{\"o}dinger bridge is due to \cite{stromme2023sampling}: Given $n$ samples from the source and target measure, one can efficiently compute the in-sample entropic optimal coupling $\hat{\pi}$ on the basis of samples via Sinkhorn's algorithm. 
Resampling a pair $(X', Y') \sim \hat{\pi}$ and computing the Brownian bridge between $X'$ and $Y'$ yields an approximate sample from the Schr\"odinger bridge.
We remark that the computational complexity of our approach is significantly lower than that of \cite{stromme2023sampling}. While both methods use Sinkhorn's algorithm to compute an entropic optimal coupling between the source and target measures, Stromme's estimator necessitates $n$ \emph{fresh} samples from $\mu$ and $\nu$ to obtain a single approximate sample from the SB.
By contrast, having used our method to estimate the drifts, fresh samples from $\mu$ can be used to generate unlimited approximate samples from the SB.
\end{remark}

\section{Main results and proof sketch}\label{sec:proof_sketch}
We now present the proof sketches to our main result. We first present a sketch focusing purely on the statistical error incurred by our estimator, and later, using standard tools \citep{chen2022sampling,lee2023convergence}, we incorporate the additional time-discretization error. All omitted proofs in this section are deferred to \cref{app:proofs}.

\subsection{Statistical analysis}\label{sec:stat_analysis}
We restrict our analysis to the one-sample estimation task, as it is the closest to real-world applications where the source measure is typically known (e.g., the standard Gaussian) and the practitioner is given finitely many samples from a distribution of interest (e.g., images). Thus, we assume full access to $\mu$ and access to $\nu$ through i.i.d.~data, and let $(\hat f, \hat g)$ correspond to the optimal entropic potentials solving $\OTeps(\mu,\nu_n)$, which give rise to an optimal entropic plan  $\pi_n$.
Formally, this corresponds to the $m \to \infty$ limit of the setting described in \cref{sec:defining_bridge}; the estimator for the drift~\eqref{eq:hatbteps} is unchanged.

Let $\tilde{\msf P }$ be the Markov measure associated with the mixture of Brownian bridges defined with respect to $\pi_n$. By \cref{prop:selfconsistency}, the $t$-marginals are given by 
\begin{align}
    \tilde{\msf p}_t(z) = \cH_{(1-t)\eps}[\exp(\hat g/\eps)\nu_n](z)\cH_{t\eps}[\exp(\hat f/\eps)\mu](z)\,,
\end{align}
and the one-sample empirical drift is equal to
\begin{align*}
    \hat{b}_t(z) = \eps \nabla \log \cH_{(1-t)\eps}[\exp(\hat g/\eps)\nu_n](z)\,.
\end{align*}
Thus, $\tilde{\msf P}$ is the law of the following process with $\tilde{X}_0 \sim \mu$
\begin{align}\label{eq:hatbt_sde}
    \dd \tilde{X}_t = \hat{b}_t(\tilde{X}_t) \dd t + \sqrt{\eps}\dd B_t\,.
\end{align}
Note that this agrees with our estimator in~\eqref{eq:disc_sde}, but without discretization.
This process is not technically implementable, but forms an important theoretical tool in our analysis.

Our main result of this section is the following theorem.
\begin{theorem}[One-sample estimation; no discretization]\label{thm:onesamp}
    Suppose both $\mu,\nu \in \cP_2(\R^d)$, and $\nu$ is supported on a $\msf k$-dimensional smooth submanifold of $\R^d$ whose support is contained in a ball of radius $R > 0$. Let $\tilde{\msf P}$ (resp. $\msf P$) be the path measure corresponding to \eqref{eq:hatbt_sde} (resp. \eqref{eq:bteps_main}). Then it holds that, for any $\tau \in [0,1)$,
    \begin{align*}
        \E[\tvsq{\tilde{\msf P}_{[0,\tau]}}{\msf P_{[0,\tau]}^\star}]
        \lesssim \Bigl( \frac{\eps^{-\msf k/2 - 1}}{\sqrt{n}} + \frac{R^2\eps^{-\msf k}}{(1-\tau)^{\msf k + 2} n} \Bigr)\,.
    \end{align*}
\end{theorem}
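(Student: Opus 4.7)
The plan is a two-step reduction: first from the path-level total variation distance to an $L^2$ drift error via Pinsker and Girsanov, and then from the drift error to a statistical estimation problem for entropic Brenier maps that exploits the intrinsic dimension of $\nu$.

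Both $\tilde{\msf P}$ and $\msf P^\star$ are laws of SDEs with common volatility $\sqrt{\eps}$ and common initial distribution $\mu$, so by Pinsker and Girsanov,
\[
\tvsq{\tilde{\msf P}_{[0,\tau]}}{\msf P^\star_{[0,\tau]}}\le \tfrac{1}{2}\kl{\msf P^\star_{[0,\tau]}}{\tilde{\msf P}_{[0,\tau]}} = \frac{1}{4\eps}\int_0^\tau\E_{\msf p_t^\star}\bigl\|b_t^\star(Z)-\hat b_t(Z)\bigr\|^2\dd t.
\]
Substituting \eqref{eq:bteps_main} and \eqref{eq:hatbteps}, the integrand equals $(1-t)^{-2}\E_{\msf p_t^\star}\|\nabla\phi_{1-t}^\star(Z)-\nabla\hat\phi_{1-t}(Z)\|^2$, so it remains to bound in expectation over the data the $L^2(\msf p_t^\star)$ distance between the population and empirical entropic Brenier maps (each at regularization $(1-t)\eps$), and then to integrate the resulting bound against the singular weight $(1-t)^{-2}/\eps$.

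To control the pointwise error, I would add and subtract a ``hybrid'' Brenier map built from $\hat g$ and $\nu$, splitting the error into a potential-estimation piece (involving $\hat g$ versus $g^\star$, with $\nu$ fixed) and an empirical-measure piece (involving $\nu$ versus $\nu_n$, with $\hat g$ fixed). For the potential piece I would invoke the intrinsic-dimensional rates for estimation of entropic potentials from \citet{stromme2023minimum,groppe2023lower}, combined with stability of entropic Brenier maps \citep[e.g.][]{ghosal2021stability,nutz2021entropic,divol2024tight} and the softmax-Lipschitz property, to propagate a $\hat g\to g^\star$ bound into an $L^2(\msf p_t^\star)$ bound on the map with a leading rate of $\eps^{-\msf k/2-1}/\sqrt{n}$. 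For the empirical-measure piece, standard concentration for empirical Gaussian-kernel smoothings against a $\msf k$-dimensional manifold produces pointwise variances of order $R^2\eps^{-\msf k}(1-t)^{-\msf k}/n$; the $(1-t)^{-\msf k}$ singularity reflects the shrinking effective support of a Gaussian kernel of width $(1-t)\eps$ against a $\msf k$-dimensional set, and the $R^2$ factor comes from boundedness of $\nu$ in the numerator of the barycentric formula together with lower bounds on the denominator (the Gaussian-smoothed density).

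Combining the two pieces with the Girsanov integral, the variance contribution integrates via $\int_0^\tau(1-t)^{-\msf k-2}\dd t\asymp (1-\tau)^{-\msf k-1}$ to produce $R^2\eps^{-\msf k}/((1-\tau)^{\msf k+2}\,n)$, matching the second term. The principal technical obstacle is obtaining the $\tau$-independent first term: a naive application of Pinsker--Girsanov inherits the $(1-t)^{-2}$ blow-up in the bias, which must be avoided for the leading-order contribution. I expect that this requires either a refined $L^2$-stability for the entropic Brenier map that carries compensating factors of $(1-t)$ on the potential-error side, or a cleaner direct comparison of the Gaussian-smoothed two-point laws of $(X_0,X_t)$ under the two bridges---exploiting the Markov structure from \cref{prop:selfconsistency} and data-processing---to bound the leading error at the plan level without ever passing through the singular drift integral. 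Once this refinement is in place, the remaining work (moment estimates on $\msf p_t^\star$ using the $R$-boundedness of $\nu$ and time-integration of singular densities) is routine bookkeeping.
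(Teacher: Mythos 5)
There is a genuine gap, and you have partially diagnosed it yourself. Your primary route---applying Girsanov directly between $\tilde{\msf P}$ and $\msf P^\star$ and only then splitting the resulting drift-$L^2$ error into a ``potential-estimation'' piece and an ``empirical-measure'' piece---cannot produce the $\tau$-independent first term $\eps^{-\msf k/2 - 1}/\sqrt n$, because both pieces sit inside the Girsanov integrand $(1-t)^{-2}\E\|\nabla\phi^\star_{1-t}-\nabla\hat\phi_{1-t}\|^2$, and the $(1-t)^{-2}$ weight forces any $t$-uniform bound on the bias piece to blow up as $\tau\to 1$. You are right to flag this, and your suggested remedy (b)---comparing path laws at the plan level via the Markov structure and data processing, before ever invoking Girsanov---is in fact what the paper does, but you leave it unexecuted and the order of operations matters.

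The paper's resolution is to perform the triangle inequality at the level of path measures first, introducing an intermediate Schr\"odinger bridge $\bar{\msf P}$, and only afterwards deciding how to control each KL separately. Concretely, $\bar{\msf P}$ is the mixture of Brownian bridges associated to the coupling $\bar\pi_n$ built from the \emph{rounded} potential $\bar f$ (the Sinkhorn update of $g^\star$ against $\nu_n$) paired with the population $g^\star$ and the empirical $\nu_n$. This differs from your ``hybrid'' (which pairs $\hat g$ with the population $\nu$) in an essential way: because $\bar\pi_n$ genuinely has first marginal $\mu$ and is of the product-exponential form, \cref{prop:selfconsistency} applies to both $\tilde{\msf P}$ and $\bar{\msf P}$, and both are Markov mixtures of Brownian bridges. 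One can then bound $\E[\kl{\tilde{\msf P}_{[0,\tau]}}{\bar{\msf P}_{[0,\tau]}}]$ by the \emph{data-processing inequality} (projecting onto $(X_0,X_1)$), reducing to $\E[\kl{\pi_n}{\bar\pi_n}]$; the rounding argument of \cite{stromme2023minimum} then shows this is at most $\eps^{-1}\E[\OTeps(\mu,\nu_n)-\OTeps(\mu,\nu)]$, and the intrinsic-dimensional rate of \cite{groppe2023lower} gives $\eps^{-\msf k/2-1}n^{-1/2}$---with no $\tau$-dependence, since Girsanov was never used on this side. Only the second term $\E[\kl{\msf P^\star_{[0,\tau]}}{\bar{\msf P}_{[0,\tau]}}]$ is handled by Girsanov, and there the $(1-t)^{-2}$ weight is harmless because the pointwise estimate (\cref{lem:pointwise}) already carries the singular $(1-t)^{-\msf k}$ factor, and the time integral is absorbed into $(1-\tau)^{-\msf k-2}$. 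Your bookkeeping for this second piece is essentially correct; the missing idea is that the bias term must be routed through data processing at the plan level, not through Girsanov, and that requires choosing the intermediate so that it is itself a mixture of Brownian bridges amenable to the rounding trick.
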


As mentioned in the introduction, the parametric rates will not be surprising given the proof sketch below, which incorporates ideas from entropic optimal transport. The rates diverge exponentially in $\msf k$ as $\tau \to 1$; this is a consequence of the fact that the estimated drift $\hat b_t$ enforces that the samples exactly collapse onto the training data at terminal time, which is far from the true target measure.

The proof of \cref{thm:onesamp} uses key ideas from \cite{stromme2023minimum}: We introduce the following entropic plan 
\begin{align}\label{eq:pibar}
    \bar{\pi}_n(x,y) \defeq \Lambda_\eps\exp\bigl((\bar{f}(x) + g^\star(y) - \tfrac12\|x-y\|^2)/\eps\bigr)\mu({\rm d}x) \nu_n({\rm d}y)\,,
\end{align}
where $g^\star$ is the optimal entropic potential for the population measures ($\mu$, $\nu$), and
where we call $\bar{f} :\R^d \to \R$ a \emph{rounded} potential, defined as
\begin{align*}
    \bar{f}(x) \defeq -\eps \log \Bigl(\Lambda_\eps \cdot n^{-1}\sum_{j=1}^n \exp((g^\star(Y_j) - \tfrac12\|x-Y_j\|^2)/\eps)\Bigr)\,.
\end{align*}
Note that $\bar{f}$ can be viewed as the Sinkhorn update involving the potential $g^\star$ and measure $\nu_n$, and that $\bar{\pi}_n \in \Gamma(\mu,\bar{\nu}_n)$, where $\bar{\nu}_n$ is a rescaled version of $\nu_n$.
{We again exploit \cref{prop:selfconsistency}. Consider the path measure associated to the mixture of Brownian bridges with respect to $\bar{\pi}_n$, denoted $\bar{\msf P}$ (with $t$-marginals $\bar{\msf{p}}_t$), which corresponds to an SDE with drift}
\begin{align}\label{eq:barbt}
\begin{split}
    \bar{b}_t(z) &= \eps \nabla \log \cH_{1-t}[\exp(g^\star/\eps)\nu_n](z) \\
    &= (1-t)^{-1}\Bigl(-z + \frac{\sum_{j=1}^NY_j \exp((g^\star(Y_j) + \tfrac{1}{2(1-t)}\|z-Y_j\|^2)/\eps) }{\sum_{j=1}^N\exp((g^\star(Y_j) + \tfrac{1}{2(1-t)}\|z-Y_j\|^2)/\eps) }\Bigr)\,.    
\end{split}
\end{align}
Introducing the path measure $\bar{\msf{P}}_{[0,\tau]}$ into the bound via triangle inequality and then applying Pinsker's inequality, we arrive at
\begin{align*}
    \E[\tvsq{\tilde{\msf P}_{[0,\tau]}}{\msf P_{[0,\tau]}^\star}] &\lesssim \E[\tvsq{\tilde{\msf P}_{[0,\tau]}}{\bar{\msf P}_{[0,\tau]}}] + \E[\tvsq{\bar{\msf P}_{[0,\tau]}}{\msf P_{[0,\tau]}^\star}] \\
    &\lesssim \E[\kl{\tilde{\msf P}_{[0,\tau]}}{\bar{\msf P}_{[0,\tau]}}] + \E[\kl{{\msf P}_{[0,\tau]}^\star}{\bar{\msf P}_{[0,\tau]}}]\,,
\end{align*}
We analyse the two terms separately, each term involving proof techniques developed by \cite{stromme2023minimum}. We summarize the results in the following propositions, which yield the proof of \cref{thm:onesamp}.
\begin{proposition}\label{prop:kl_path_prop}
    Assume the conditions of \cref{thm:onesamp}, then for any $\tau \in [0,1)$
    \begin{align*}
        \E[\kl{\tilde{\msf P}_{[0,\tau]}}{\bar{\msf P}_{[0,\tau]}}] \leq \frac{1}{\eps}\E[\OTeps(\mu,\nu_n) - \OTeps(\mu,\nu)] \leq \eps^{-(\msf k/2 + 1)}n^{-1/2}\,.
    \end{align*}
\end{proposition}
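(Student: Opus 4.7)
The plan is to reduce the path-measure KL to a static KL between joint endpoint distributions, rewrite that static KL as a difference of dual functionals for $\OTeps(\mu,\nu_n)$, and then invoke Jensen's inequality together with a known intrinsic-dimension sample-complexity bound for entropic OT.

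First, I would use the data-processing inequality for the restriction map to get $\kl{\tilde{\msf P}_{[0,\tau]}}{\bar{\msf P}_{[0,\tau]}} \leq \kl{\tilde{\msf P}}{\bar{\msf P}}$. Both $\tilde{\msf P}$ and $\bar{\msf P}$ are, by construction and by \cref{prop:selfconsistency}, mixtures of Brownian bridges with joint endpoint distributions $\pi_n$ and $\bar\pi_n$ respectively, so they share the same conditional law $\msf R_{|01}$ given the endpoints. The chain rule for KL then collapses the path-KL to the static one: $\kl{\tilde{\msf P}}{\bar{\msf P}} = \kl{\pi_n}{\bar\pi_n}$.

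Next, I would expand $\kl{\pi_n}{\bar\pi_n}$ using the explicit Gibbs forms of the two couplings. Letting $(\hat f,\hat g)$ be the optimal potentials for $\OTeps(\mu,\nu_n)$, a direct computation yields $\kl{\pi_n}{\bar\pi_n} = \eps^{-1}\bigl[\int(\hat f-\bar f)\,\dd\mu + \int(\hat g-g^\star)\,\dd\nu_n\bigr]$. Since $(\hat f,\hat g)$ is dual-optimal, $\pi_n$ is a probability measure, so the quadratic correction term in $\Phi^{\mu\nu_n}(\hat f,\hat g)$ vanishes and $\int \hat f\,\dd\mu + \int \hat g\,\dd\nu_n = \OTeps(\mu,\nu_n)$. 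The key structural point is that $\bar f$ is defined precisely as the Sinkhorn update of $g^\star$ against $\nu_n$, which guarantees $\bar\pi_n \in \Pi(\mu,\bar\nu_n)$ with left marginal exactly $\mu$; this forces the correction term in $\Phi^{\mu\nu_n}(\bar f,g^\star)$ to vanish as well, so $\int\bar f\,\dd\mu + \int g^\star\,\dd\nu_n = \Phi^{\mu\nu_n}(\bar f,g^\star)$. Combining, $\kl{\pi_n}{\bar\pi_n} = \eps^{-1}\bigl(\OTeps(\mu,\nu_n) - \Phi^{\mu\nu_n}(\bar f,g^\star)\bigr)$.

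Then I would take expectations and apply Jensen. Since $-\log$ is convex, for each fixed $x$ we have $\E[\bar f(x)] \geq -\eps\log\bigl(\Lambda_\eps\int e^{(g^\star(y)-\tfrac12\|x-y\|^2)/\eps}\,\dd\nu(y)\bigr) = f^\star(x)$, using the defining optimality condition for $f^\star$. Together with $\E[\int g^\star\,\dd\nu_n] = \int g^\star\,\dd\nu$, this gives $\E[\Phi^{\mu\nu_n}(\bar f,g^\star)] \geq \int f^\star\,\dd\mu + \int g^\star\,\dd\nu = \OTeps(\mu,\nu)$, whence $\E[\kl{\pi_n}{\bar\pi_n}] \leq \eps^{-1}\E[\OTeps(\mu,\nu_n) - \OTeps(\mu,\nu)]$. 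This establishes the first inequality. For the second, I would invoke the intrinsic-dimension sample-complexity bound of \citet{stromme2023minimum} (see also \citet{groppe2023lower}), which under the stated support assumption on $\nu$ gives $\E[\OTeps(\mu,\nu_n) - \OTeps(\mu,\nu)] \lesssim \eps^{-\msf k/2}n^{-1/2}$, yielding the claimed $\eps^{-(\msf k/2+1)}n^{-1/2}$.

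The main obstacle is recognizing the two structural simplifications that make everything line up: (i) the identical Brownian-bridge conditionals collapse the path-space KL to a static KL; and (ii) the rounded potential $\bar f$ is engineered so that $\bar\pi_n$ has left marginal $\mu$, which is what causes the dual correction terms to cancel and lets the KL be written as a difference of two dual values, one of which is the optimum. After these observations the Jensen step is routine and the final estimate reduces to a known sample-complexity rate.
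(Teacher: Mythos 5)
Your proof is correct and follows essentially the same route as the paper: data-processing reduces the path-space KL to $\kl{\pi_n}{\bar\pi_n}$, the Gibbs form of the two couplings turns that into a difference of dual objectives, the vanishing of the correction term for the Sinkhorn-rounded $\bar f$ yields $\OTeps(\mu,\nu_n) - \Phi^{\mu\nu_n}(\bar f,g^\star)$, and the final rate comes from the intrinsic-dimension sample-complexity bound of \citet{groppe2023lower}. The only real difference is the step that lower-bounds $\E\bigl[\int \bar f\,\dd\mu\bigr]$: the paper invokes the rounding optimality $\bar f = \argmax_f \Phi^{\mu\nu_n}(f,g^\star)$ (so $\Phi^{\mu\nu_n}(\bar f,g^\star)\geq\Phi^{\mu\nu_n}(f^\star,g^\star)$) and then passes to expectation, while you apply Jensen to the $-\log$ in the definition of $\bar f$ to get $\E[\bar f(x)]\geq f^\star(x)$ pointwise; these are two faces of the same observation, and your version is arguably cleaner since it never touches the correction term in $\Phi^{\mu\nu_n}(f^\star,g^\star)$, which the paper's exposition implicitly handles only after taking expectation.
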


\begin{proposition}\label{prop:kl_grisanov_prop}
    Assume the conditions of \cref{thm:onesamp}, then
    \begin{align*}
        \E[\kl{{\msf P}_{[0,\tau]}^\star}{\bar{\msf P}_{[0,\tau]}}] \leq \frac{R^2 \eps^{-\msf k}}{n}(1-\tau)^{-\msf k - 2}\,.
    \end{align*}
\end{proposition}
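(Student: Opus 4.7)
The plan is to apply Girsanov's theorem to the two SDEs associated with $\msf P^\star$ and $\bar{\msf P}$, and then exploit the observation that both drifts involve the \emph{same} population potential $g^\star$, so that the resulting difference is a genuine Monte-Carlo error that benefits from the intrinsic dimensionality of $\nu$.

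First, I would invoke \cref{prop:selfconsistency} applied to $\bar{\pi}_n$---whose left marginal is $\mu$ by construction of $\bar{f}$---to conclude that $\bar{\msf P}$ is the law of the SDE $\dd X_t = \bar{b}_t(X_t)\dd t + \sqrt{\eps}\dd B_t$ with $X_0 \sim \mu$ and $\bar{b}_t$ given by \eqref{eq:barbt}. Since this shares a diffusion coefficient and initial distribution with \eqref{eq:sde_main}, Girsanov's theorem and Fubini give
\begin{equation*}
    \E\brac{\kl{\msf P^\star_{[0,\tau]}}{\bar{\msf P}_{[0,\tau]}}} = \frac{1}{2\eps}\int_0^\tau \E_{X_t \sim \msf p_{t}^\star}\E\brac{\|b_t^\star(X_t) - \bar{b}_t(X_t)\|^2}\dd t\,.
\end{equation*}
Comparing \eqref{eq:bteps_main} and \eqref{eq:barbt}, both drifts use the same population potential $g^\star$; only the averaging measure differs ($\nu$ versus $\nu_n$). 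Denoting the corresponding empirical barycentric projection by $\nabla\bar{\phi}_{1-t}$, the drift difference factors as
\begin{equation*}
    b_t^\star(z) - \bar{b}_t(z) = (1-t)^{-1}\bigl(\nabla\phi_{1-t}^\star(z) - \nabla\bar{\phi}_{1-t}(z)\bigr)\,,
\end{equation*}
so the analysis decouples from the potential-estimation error handled separately in \cref{prop:kl_path_prop}.

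Next, I would bound the mean-squared error of the ratio estimator via a delta-method/direct variance computation. Writing $K_t(z,y) \defeq \exp\bigl((g^\star(y) - \tfrac{1}{2(1-t)}\|z-y\|^2)/\eps\bigr)$ and using $\|Y\| \leq R$, one expects a bound of the form
\begin{equation*}
    \E\brac{\|\nabla\bar{\phi}_{1-t}(z) - \nabla\phi_{1-t}^\star(z)\|^2} \lesssim \frac{R^2}{n}\cdot\frac{\E_\nu[K_t(z,Y)^2]}{\bigl(\E_\nu[K_t(z,Y)]\bigr)^2}\,.
\end{equation*}
Intrinsic dimensionality enters here: because both numerator and denominator are heat-kernel-type averages against a measure supported on a $\msf k$-dimensional manifold (with the tilt $\exp(g^\star/\eps)$ tame by Lipschitz regularity of $g^\star$), the moment ratio scales like $((1-t)\eps)^{-\msf k}$ rather than $((1-t)\eps)^{-d}$, in the spirit of the estimates of \cite{stromme2023minimum} and \cite{groppe2023lower}. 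Substituting into Girsanov and integrating in time yields the claimed rate:
\begin{equation*}
    \E\brac{\kl{\msf P^\star_{[0,\tau]}}{\bar{\msf P}_{[0,\tau]}}} \lesssim \frac{R^2}{n\eps}\int_0^\tau (1-t)^{-2}\bigl((1-t)\eps\bigr)^{-\msf k}\dd t \lesssim \frac{R^2 \eps^{-\msf k}}{n(1-\tau)^{\msf k+2}}\,.
\end{equation*}

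The hard part will be establishing this intrinsic-dimensional variance bound \emph{uniformly} over $z$ in the support of $\msf p_{t}^\star$, rather than only at $z$ on the manifold itself: a naive bound would reintroduce the ambient dimension $d$. To remedy this, one has to combine (i) localization of $K_t(z,\cdot)$ to a $\sqrt{(1-t)\eps}$-neighborhood of $\supp(\nu)$, (ii) Lipschitz regularity of $g^\star$ to tame the tilt $\exp(g^\star/\eps)$, and (iii) stability of entropic interpolations to argue that the mass of $\msf p_{t}^\star$ stays sufficiently close to $\supp(\nu)$ for $t$ not too near $0$. Coordinating these three ingredients is the technical heart of the argument.
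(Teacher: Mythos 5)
Your overall strategy matches the paper's: apply Girsanov to reduce the KL on path space to an integrated squared drift difference, recognize that the drift difference is $(1-t)^{-1}(\nabla\phi^\star_{1-t}-\nabla\bar\phi_{1-t})$ (a difference of entropic Brenier maps at regularization level $(1-t)\eps$), bound the resulting mean-squared error using the intrinsic dimension of $\nu$, and integrate in $t$. You also correctly identify the crucial structural feature---that both $\bar b_t$ and $b_t^\star$ are built from the same population potential $g^\star$, so the gap is a pure Monte--Carlo error rather than a potential-estimation error.

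Where you diverge from the paper is in how the per-$t$ MSE bound is established, and in your assessment of what the ``hard part'' is. The paper does not use a delta-method heuristic; instead (following \citet{stromme2023minimum}) it inserts the \emph{unnormalized} estimator $n^{-1}\sum_j Y_j\gamma_t^\star(\cdot,Y_j)$ between $\nabla\bar\phi_{1-t}$ and $\nabla\phi^\star_{1-t}$, where $\gamma_t^\star = {\rm d}\pi^\star_t/{\rm d}(\msf p_t^\star\otimes\nu)$. The second piece is a genuine Monte--Carlo error, and the first piece (the normalization error) is bounded by $R^2\,|n^{-1}\sum_j\gamma_t^\star(\cdot,Y_j)-1|^2$, which is then controlled via the marginal error of the entropic dual objective. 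Both pieces reduce, after taking the $L^2(\msf p_t^\star)$ norm and expectation, to $\tfrac{R^2}{n}\|\gamma_t^\star\|^2_{L^2(\msf p_t^\star\otimes\nu)}$, and the intrinsic-dimension bound $\|\gamma_t^\star\|^2_{L^2(\msf p_t^\star\otimes\nu)}\le ((1-t)\eps)^{-\msf k}$ is exactly Lemma~16 of \citet{stromme2023minimum}. This decomposition is the rigorous counterpart of your delta-method intuition (exact rather than first-order), so your heuristic points in the right direction but would need to be tightened to a genuine proof.

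Your stated ``hard part''---establishing the moment-ratio bound \emph{uniformly over $z$} in the support of $\msf p_t^\star$, and the three-ingredient remedy you propose---is a misdiagnosis. The proof never needs a pointwise-in-$z$ bound: everything is done in $L^2(\msf p_t^\star)$, so the relevant quantity is the averaged norm $\|\gamma_t^\star\|^2_{L^2(\msf p_t^\star\otimes\nu)}$, not a supremum over $z$. Str\"omme's lemma already packages the localization of the heat kernel to a $\sqrt{(1-t)\eps}$-tube around the manifold, the handling of the tilt $e^{g^\star/\eps}$, and the averaging in $z$ against $\msf p_t^\star$; none of (i)--(iii) in your plan need to be coordinated by hand, and attempting a uniform-in-$z$ bound would indeed reintroduce ambient-dimension factors that the averaging avoids. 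Finally, a small bookkeeping point: retaining the $\tfrac{1}{2\eps}$ Girsanov prefactor as you do gives $\eps^{-\msf k-1}$, not $\eps^{-\msf k}$, in the intermediate display; your last line silently drops that extra $\eps^{-1}$, and the paper's own proof is similarly loose about this prefactor.
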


\subsection{Completing the results}\label{sec:completing}
{We now incorporate the discretization error. Letting $\hat{\msf P}$ denote the path measure induced by the dynamics of \eqref{eq:disc_sde}, we use the triangle inequality to introduce the path measure $\tilde{\msf P}$:
\begin{align*}
    \E[\tvsq{\hat{\msf P}_{[0,\tau]}}{\msf P_{[0,\tau]}^\star}] \lesssim \E[\tvsq{\hat{\msf P}_{[0,\tau]}}{\tilde{\msf P}_{[0,\tau]}}] + \E[\tvsq{\tilde{\msf P}_{[0,\tau]}}{{\msf P}_{[0,\tau]}^\star}]\,.
\end{align*}
The second term is precisely the statistical error, controlled by \cref{thm:onesamp}.
For the first term, we employ a now-standard discretization argument (see e.g., \cite{chen2022sampling}) which bounds the total variation error as a function of the step-size parameter $\eta$ and the Lipschitz constant of the empirical drift, which can be easily bounded in our setting.}

\begin{proposition}\label{prop:disc_error}
Suppose $\mu,\nu \in \cP_2(\R^d)$. Denoting $L_\tau$ for the Lipschitz constant of $\hat{b}_\tau$ (recall \cref{eq:hatbteps}) for $t \in [0,1)$ and $\eta$ the step-size of the SDE discretization, it holds that
 \begin{align*}
    \E[\tvsq{\hat{\msf P}_{[0,\tau]}}{\tilde{\msf P}_{[0,\tau]}}] \lesssim (\eps+1)L_\tau^2 d \eta\,.
\end{align*}
In particular, if $\supp(\nu)\sse B(0;R)$, then
\begin{align*}
    \E[\tvsq{\hat{\msf P}_{[0,\tau]}}{\tilde{\msf P}_{[0,\tau]}}]  \lesssim (\eps+1)(1-\tau)^{-2} d \eta (1 \vee R^4(1-\tau)^{-2}\eps^{-2})\,.
\end{align*}
\end{proposition}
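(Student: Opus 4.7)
The plan is to reduce $\tvsq{\hat{\msf P}_{[0,\tau]}}{\tilde{\msf P}_{[0,\tau]}}$ to a KL divergence via Pinsker's inequality, express the latter through Girsanov's theorem, and then bound the resulting integrated squared drift difference by combining a Lipschitz estimate with standard SDE increment bounds. Since $\hat{\msf P}$ and $\tilde{\msf P}$ arise from SDEs with the same diffusion coefficient $\sqrt{\eps}$, the same Brownian driver, and initial distribution $\mu$, Pinsker followed by Girsanov yields
\begin{align*}
\tvsq{\hat{\msf P}_{[0,\tau]}}{\tilde{\msf P}_{[0,\tau]}} \leq \tfrac{1}{2}\kl{\hat{\msf P}_{[0,\tau]}}{\tilde{\msf P}_{[0,\tau]}} = \frac{1}{4\eps}\,\E_{\hat{\msf P}}\!\int_0^\tau \bigl\|\hat b_{k\eta(t)}(\hat X_{k\eta(t)}) - \hat b_t(\hat X_t)\bigr\|^2 \dd t\,,
\end{align*}
where $k\eta(t) \defeq \eta\lfloor t/\eta\rfloor$, so the task reduces to controlling the integrated squared drift difference along the discretized process.

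I would decompose the integrand via the triangle inequality into a \emph{space} contribution $\bigl\|\hat b_t(\hat X_t) - \hat b_t(\hat X_{k\eta(t)})\bigr\|^2 \leq L_t^2\,\|\hat X_t - \hat X_{k\eta(t)}\|^2$ and a \emph{time} contribution $\bigl\|\hat b_t(\hat X_{k\eta(t)}) - \hat b_{k\eta(t)}(\hat X_{k\eta(t)})\bigr\|^2$. The space part is controlled using the standard Euler--Maruyama increment estimate $\E\|\hat X_t - \hat X_{k\eta(t)}\|^2 \lesssim \eps d\eta + \eta^2\,\E\|\hat b_{k\eta(t)}(\hat X_{k\eta(t)})\|^2$, where the first term is the Brownian contribution and the second follows from the drift being uniformly Lipschitz on the discretization sub-interval. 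The time piece can be controlled by differentiating~\eqref{eq:hatbteps} in $t$: $\partial_t \hat b_t$ is again expressible in terms of softmax-weighted moments of $\{Y_j\}$ and is bounded by a quantity of the same order as $L_t$. Summing over the discretization grid, noting $L_t \leq L_\tau$ for $t \leq \tau$ (the bound on $L_t$ we derive is monotone in $t$), and absorbing the $1/\eps$ prefactor from Girsanov, we obtain $\kl{\hat{\msf P}_{[0,\tau]}}{\tilde{\msf P}_{[0,\tau]}} \lesssim (\eps+1) L_\tau^2 d\eta$.

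For the refinement under $\supp(\nu) \subseteq B(0,R)$, I would compute $L_\tau$ directly. Differentiating $\hat b_\tau(z) = (1-\tau)^{-1}\bigl(-z + \nabla\hat\phi_{1-\tau}(z)\bigr)$ from~\eqref{eq:hatbteps} gives
\begin{align*}
\nabla \hat b_\tau(z) = (1-\tau)^{-1}\bigl(\nabla^2 \hat\phi_{1-\tau}(z) - I\bigr)\,.
\end{align*}
The Hessian identity~\eqref{eq:enthessians}, adapted to the regularization parameter $(1-\tau)\eps$, shows that $\nabla^2 \hat\phi_{1-\tau}(z) = ((1-\tau)\eps)^{-1}\,\cov_{w(z)}[Y]$, where $w(z)$ is the softmax distribution over $\{Y_j\}$ induced by~\eqref{eq:hatbteps}. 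Since each $Y_j \in B(0,R)$, this covariance has operator norm at most $R^2$, yielding $L_\tau \lesssim (1-\tau)^{-1}\bigl(1 + R^2/((1-\tau)\eps)\bigr)$ and hence $L_\tau^2 \lesssim (1-\tau)^{-2}\bigl(1 \vee R^4/((1-\tau)^2\eps^2)\bigr)$. Substituting into the first claim produces the refined bound.

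The main obstacle is handling the \emph{time} piece of the drift difference cleanly. The estimated drift $\hat b_t$ depends nontrivially on $t$ through both the $(1-t)^{-1}$ prefactor and the heat-kernel weights in~\eqref{eq:hatbteps}, so showing that the time-regularity estimate matches the spatial Lipschitz constant up to the right power of $\eps$ (so as to assemble into a single $(\eps+1) L_\tau^2 d\eta$ bound) requires careful bookkeeping. This is a now-standard manipulation in the score-based sampling literature---see, e.g., \cite{chen2022sampling}---but it is the step most prone to hidden constants if executed carelessly. The remaining ingredients (Pinsker, Girsanov, Euler--Maruyama increment bounds, and the covariance identity for $\nabla^2\hat\phi$) are routine.
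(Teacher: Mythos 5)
Your skeleton is right and most of the pieces match the paper's proof: Pinsker followed by Girsanov to reduce TV to an integrated squared drift difference, the space--time decomposition of $\hat b_{k\eta}(X_{k\eta}) - \hat b_t(X_t)$, the Lipschitz estimate for the spatial piece, the Euler--Maruyama increment bound, and the computation $\nabla\hat b_\tau(z) = (1-\tau)^{-1}(\nabla^2\hat\phi_{1-\tau}(z) - I)$ with the $R^2/((1-\tau)\eps)$ covariance bound, which is exactly \cref{lem:hessian_lemma}. (One small difference: the paper applies Girsanov with KL$(\tilde{\msf P}\|\hat{\msf P})$ rather than KL$(\hat{\msf P}\|\tilde{\msf P})$, so that the needed increment bound $\E\|X_t - X_{k\eta}\|^2$ is taken under the continuous-time law $\tilde{\msf P}$, where it is cleanest; either direction is fine in principle.)

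The genuine gap is exactly where you flagged it: the \emph{time} piece $\|\hat b_t(X_{k\eta}) - \hat b_{k\eta}(X_{k\eta})\|^2$. Your proposal to differentiate $\hat b_t$ in $t$ is harder than it looks. Writing $\hat b_t = \eps\nabla\log q_{1-t}$ with $q_s := \cH_{s\eps}[e^{\hat g/\eps}\nu_n]$ and using $\partial_s q_s = \tfrac{\eps}{2}\Delta q_s$, one gets $\partial_t\hat b_t = -\tfrac{\eps^2}{2}\nabla\Delta\log q_{1-t} - \nabla\hat b_t\cdot\hat b_t$; the first term involves \emph{third} derivatives of $\log q_{1-t}$, which cannot be controlled by the Lipschitz constant $L_t$ alone and would require higher-order moment estimates of the softmax weights. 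The paper sidesteps this entirely with the semigroup identity $\cH_{1-k\eta}[e^{\hat g/\eps}\nu_n] = \cH_{t-k\eta}\bigl[\cH_{1-t}[e^{\hat g/\eps}\nu_n]\bigr]$, which expresses the time-shifted drift as the score of a Gaussian convolution of $q_{1-t}$, and then invokes the score-perturbation bound (Lemma~16 of \citealp{chen2022sampling}) $\bigl\|\nabla\log\tfrac{\bm q*\cN(0,(t-k\eta)I)}{\bm q}\bigr\|^2 \lesssim L_t^2\eta d + L_t^2\eta^2\|\eps\nabla\log\bm q\|^2$, which needs only the Hessian/Lipschitz information already in hand. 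Combined with \cref{lem:smoothness_lemma} to bound $\E\|\hat b_t\|^2 \lesssim \eps L_t d$, this gives the clean $(\eps+1)L_\tau^2 d\eta$ bound. Replacing your time-derivative step with this semigroup argument closes the gap; the rest of your write-up then goes through.
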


We now aggregate the statistical and approximation error into one final result.
\begin{theorem}\label{thm:agg_thm}
    Suppose $\mu, \nu \in \cP_2(\R^d)$ with $\supp(\nu) \sse B(0,R) \sse \cM$, where $\cM$ is a $\msf k$-dimensional submanifold of $\R^d$. Given $n$ i.i.d.~samples from $\nu$, the one-sample Sinkhorn bridge $\hat{\msf P}$ estimates the Schr{\"o}dinger bridge ${\msf P}^\star$ with the following error
    \begin{align*}
        \E[\tvsq{\hat{\msf P}_{[0,\tau]}}{{\msf P}^\star_{[0,\tau]}}]  &\lesssim \Bigl( \frac{\eps^{-\msf k/2 - 1}}{\sqrt{n}} + \frac{R^2\eps^{-\msf k}}{(1-\tau)^{\msf k + 2} n} \Bigr) \\
        &\qquad + (\eps+1)(1-\tau)^{-2} d \eta (1 \vee R^4(1-\tau)^{-2}\eps^{-2})\,.
    \end{align*}
Assuming $R \geq 1$ and $\eps = 1$, the Schr\"odinger bridge can be estimated in total variation distance to accuracy $\epsilon_{\mathrm{TV}}$ with $n$ samples and $N$ Euler--Maruyama steps, where 
\begin{align*}
    n \asymp \frac{R^2}{(1-\tau)^{\msf k + 2}\epsilon_{\mathrm{TV}}^2} \vee \epsilon_{\mathrm{TV}}^{-4}\,, \quad N \lesssim \frac{dR^4}{(1-\tau)^4\epsilon_{\mathrm{TV}}^2}\,.
\end{align*}
\end{theorem}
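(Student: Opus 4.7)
The plan is to combine the statistical estimation bound from Theorem \ref{thm:onesamp} with the discretization bound from Proposition \ref{prop:disc_error} via the triangle inequality for total variation, and then solve the resulting inequality for $n$ and $\eta$ so as to make each term at most a constant multiple of $\epsilon_{\mathrm{TV}}^2$.

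First, I would invoke the triangle inequality for $\mathrm{TV}$ followed by the elementary inequality $(a+b)^2 \leq 2a^2 + 2b^2$ (applied inside the expectation) to obtain
\begin{align*}
    \E[\tvsq{\hat{\msf P}_{[0,\tau]}}{\msf P_{[0,\tau]}^\star}] \lesssim \E[\tvsq{\hat{\msf P}_{[0,\tau]}}{\tilde{\msf P}_{[0,\tau]}}] + \E[\tvsq{\tilde{\msf P}_{[0,\tau]}}{\msf P_{[0,\tau]}^\star}]\,,
\end{align*}
where $\tilde{\msf P}$ is the continuous-time process defined in \eqref{eq:hatbt_sde}. The first term is bounded directly by the second part of Proposition \ref{prop:disc_error} (using $\supp(\nu)\sse B(0,R)$), and the second term is bounded by Theorem \ref{thm:onesamp}. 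Adding these two bounds yields the displayed inequality for $\E[\tvsq{\hat{\msf P}_{[0,\tau]}}{\msf P_{[0,\tau]}^\star}]$ in the statement.

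Next, I would specialize to $\eps = 1$ and $R \geq 1$, under which the bound simplifies considerably. To ensure the statistical error is at most $O(\epsilon_{\mathrm{TV}}^2)$, the two terms in Theorem \ref{thm:onesamp} require
\begin{align*}
    \frac{1}{\sqrt{n}} \lesssim \epsilon_{\mathrm{TV}}^2 \quad \text{and} \quad \frac{R^2}{(1-\tau)^{\msf k+2}\,n} \lesssim \epsilon_{\mathrm{TV}}^2\,,
\end{align*}
which combine to give $n \asymp R^2 (1-\tau)^{-(\msf k+2)}\epsilon_{\mathrm{TV}}^{-2} \vee \epsilon_{\mathrm{TV}}^{-4}$. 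To ensure the discretization error is also $O(\epsilon_{\mathrm{TV}}^2)$, Proposition \ref{prop:disc_error} (with $\eps=1$, $R \geq 1$) reduces to demanding
\begin{align*}
    (1-\tau)^{-2} d \eta \cdot R^4 (1-\tau)^{-2} \lesssim \epsilon_{\mathrm{TV}}^2\,,
\end{align*}
so $\eta \lesssim (1-\tau)^4 \epsilon_{\mathrm{TV}}^2/(d R^4)$. Since the total number of Euler--Maruyama steps to reach time $\tau \in [0,1)$ is $N = \lceil \tau/\eta \rceil \leq 1/\eta$, this yields $N \lesssim d R^4 (1-\tau)^{-4} \epsilon_{\mathrm{TV}}^{-2}$, as claimed.

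The proof is essentially bookkeeping; there is no substantive obstacle beyond being careful to combine the correct asymptotic regimes (the $\epsilon_{\mathrm{TV}}^{-4}$ term in $n$ comes from balancing the $1/\sqrt n$ rate, while the other arises from the second statistical term). The only mild subtlety is justifying the triangle inequality step and confirming that the same choice of $\tau$ is used throughout both bounds; the analysis does not require $\tau \to 1$, and indeed the bound blows up in that regime, reflecting the fact that the estimated drift collapses mass onto the empirical samples at terminal time.
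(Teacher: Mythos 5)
Your proposal is correct and follows exactly the paper's own route: triangle inequality to insert the continuous-time estimator $\tilde{\msf P}$, bound the two resulting terms by Theorem~\ref{thm:onesamp} and Proposition~\ref{prop:disc_error} respectively, then specialize to $\eps=1$, $R\geq 1$ (so the maximum in the discretization bound is achieved by $R^4(1-\tau)^{-2}$) and solve for $n$ and $\eta$ to make each contribution $O(\epsilon_{\mathrm{TV}}^2)$, with $N \lesssim 1/\eta$. No gaps; the bookkeeping matches the paper.
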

Note that our error rates improve as $\eps \to \infty$; since this is also the regime in which Sinkhorn's algorithm terminates rapidly, it is natural to suppose that $\eps$ should be large in practice.
This is misleading, however: as $\eps$ grows, the Schr\"odinger bridge becomes less and less informative,\footnote{In other words, the transport path is more and more volatile.} and the marginal $\msf p^\star_\tau$ only resembles $\nu$ when $\tau$ becomes very close to $1$. We elaborate on the use of the SB for sampling in the following section.

{\subsection{Application: Sampling with the F{\"o}llmer bridge}\label{sec:follmer_sampling}
\cref{thm:agg_thm} does not immediately imply guarantees for sampling from the target distribution $\nu$.
Obtaining such guarantees requires arguing that simulating the Sinkhorn bridge on a suitable interval $[0, \tau]$ for $\tau$ close to $1$ yields samples close to the true density (without completely collapsing onto the training data).
We provide such a guarantee in this section, for the special case of the F\"ollmer bridge.
We adopt this setting only for concreteness; similar arguments apply more broadly.

The F\"ollmer bridge is a special case of the Schr{\"o}dinger bridge due to Hans F{\"o}llmer~\citep{Fol85}. In this setting, $\mu=\delta_a$ for any $a \in \R^d$,
and our estimator takes a particularly simple form:\begin{align}\label{eq:estdrift_follmer}
    \hat{b}^{\msf F}_{t}(z) = (1-t)^{-1}\Bigl(-z + \frac{\sum_{j=1}^n Y_j \exp\bigl(( \tfrac12\|Y_j\|^2  - \tfrac{1}{2(1-t)}\|z-Y_j\|^2)/\eps\bigr)}{\sum_{j=1}^n \exp\bigl((\tfrac12\|Y_j\|^2 - \tfrac{1}{2(1-t)}\|z-Y_j\|^2)/\eps\bigr)} \Bigr)\,,
\end{align}
Note that in this special case, calculating the drift does not require the use of Sinkhorn's algorithm, and the drift, in fact, corresponds to the score of a kernel density estimator applied to $\nu_n$.
We provide a calculation of these facts in \cref{sec:follmer_calcs} for completeness.} 

We then have the following guarantee.

\begin{corollary}\label{cor:sampling_follmer_prop}
Consider the assumptions of \cref{thm:agg_thm}, further suppose that $\mu = \delta_0$ and $\eps = 1$ and that the second moment of $\nu$ is bounded by $d$. Suppose we use $n$ samples from $\nu$ to estimate the F{\"o}llmer drift, and simulate the resulting SDE using $N$ Euler--Maruyama iterations until time $\tau = 1 - \epsilon^2_{\mathrm{W}_2}/d$, with
\begin{align*}
    n \asymp \frac{R^2 d^{\msf k + 2}}{\epsilon_{\mathrm{W}_2}^{\msf 2k + 4}\epsilon_{\mathrm{TV}}^2} \vee \epsilon_{\mathrm{TV}}^{-4} \qquad N \lesssim \frac{R^4 d^5}{\epsilon_{\mathrm{W}_2}^{8}\epsilon_{\mathrm{TV}}^2}\,.
\end{align*}
Then the density given by the Sinkhorn bridge at time $\tau$ iterations will be $\epsilon_{\mathrm{TV}}$-close in total variation to a measure which is $\epsilon_{\mathrm{W}_2}$-close to $\nu$ in the $2$-Wasserstein distance.\looseness-1
\end{corollary}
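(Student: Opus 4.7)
The plan is to factor the claim through the intermediate measure $\msf p^\star_\tau$ (the time-$\tau$ marginal of the true Schr\"odinger bridge) and establish two bounds separately: the TV distance between $\hat{\msf p}_\tau$ and $\msf p^\star_\tau$, and the $W_2$ distance between $\msf p^\star_\tau$ and $\nu$. Taken together, these yield the claimed ``TV-close to a measure that is $W_2$-close to $\nu$'' guarantee.

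For the TV piece, I would appeal directly to \cref{thm:agg_thm}. Since the TV distance between marginals is dominated by the TV distance between path measures, it suffices to bound $\E[\tvsq{\hat{\msf P}_{[0,\tau]}}{\msf P^\star_{[0,\tau]}}]$. With $\eps = 1$ and $1-\tau = \epsilon_{\mathrm{W}_2}^2/d$, we have $(1-\tau)^{-\msf k - 2} = d^{\msf k + 2}/\epsilon_{\mathrm{W}_2}^{2\msf k + 4}$ and $(1-\tau)^{-4} = d^{4}/\epsilon_{\mathrm{W}_2}^{8}$; substituting these into the sample and step-count complexities of \cref{thm:agg_thm} gives exactly the scaling of $n$ and $N$ stated in the corollary (and Jensen turns the in-expectation $\mathrm{TV}^2$ bound into a $\mathrm{TV}$ bound). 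The only point requiring a remark is that \cref{thm:agg_thm} continues to apply with $\mu = \delta_0$: indeed, $\delta_0 \in \cP_2(\R^d)$, the static entropic coupling between $\delta_0$ and $\nu_n$ is trivially $\delta_0 \otimes \nu_n$, and the drift formula \eqref{eq:hatbteps} collapses to the explicit Föllmer drift \eqref{eq:estdrift_follmer} (so Sinkhorn's algorithm is not actually invoked in this special case).

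For the $W_2$ piece, I would use the explicit Gaussian-mixture form of $\msf p^\star_\tau$. Since $\mu = \delta_0$, the Schr\"odinger bridge is the law of Brownian bridges from $0$ to $Y \sim \nu$; the time-$\tau$ marginal of such a bridge with volatility $\eps = 1$ is $\cN(\tau y, \tau(1-\tau) I)$, so $\msf p^\star_\tau$ is the law of $Z = \tau Y + \sqrt{\tau(1-\tau)}\, G$ with $G \sim \cN(0, I)$ independent of $Y$. Using this synchronous coupling to $Y \sim \nu$ and the second-moment bound $\E\|Y\|^2 \leq d$,
\begin{align*}
    W_2^2(\msf p^\star_\tau, \nu) \leq \E\|Z - Y\|^2 = (1-\tau)^2 \E\|Y\|^2 + \tau(1-\tau)\, d \leq d(1-\tau)[(1-\tau) + \tau] = (1-\tau)\, d = \epsilon_{\mathrm{W}_2}^2\,,
\end{align*}
so $W_2(\msf p^\star_\tau, \nu) \leq \epsilon_{\mathrm{W}_2}$, as desired.

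The main conceptual step is the calibration $1-\tau \asymp \epsilon_{\mathrm{W}_2}^2/d$, which balances the two sources of error: $\tau$ must be close enough to $1$ for the SB marginal to resemble $\nu$ in $W_2$, yet not so close that the statistical and discretization terms from \cref{thm:agg_thm} blow up intolerably. Beyond this calibration and the bookkeeping of exponents described above, there is no real obstacle; the corollary is essentially a direct consequence of \cref{thm:agg_thm} combined with an elementary Gaussian-mixture $W_2$ computation.
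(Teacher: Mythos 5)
Your proof is correct and follows the same two-step scheme as the paper: apply \cref{thm:agg_thm} with the calibration $1-\tau = \epsilon_{\mathrm{W}_2}^2/d$ (the exponent bookkeeping for $n$ and $N$ checks out), and separately bound $W_2^2(\msf p^\star_\tau,\nu) \leq d(1-\tau)$. The only (very minor) deviation is in how the $W_2$ bound is derived: the paper parameterizes in reverse time and invokes F\"ollmer's reverse-bridge SDE ${\rm d}Y_s = -\frac{Y_s}{1-s}\,{\rm d}s + {\rm d}B_s$ with its explicit solution, whereas you read off the forward marginal $\msf p^\star_\tau = \mathrm{Law}(\tau Y + \sqrt{\tau(1-\tau)}\,G)$ directly from the Brownian-bridge mixture representation and couple it synchronously to $Y\sim\nu$. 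Both are elementary one-line calculations yielding the identical bound $d(1-\tau)$; yours is arguably more self-contained given that it follows directly from \cref{prop:selfconsistency} and the trivial form of $\pi^\star = \delta_0\otimes\nu$, rather than quoting the reverse-time SDE representation.
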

Note that the choice $\eps = 1$ was merely out of convenience. If instead the practitioner was willing to pay the computational price of solving Sinkhorn's algorithm for small $\eps$ and large $n$, then the number of requisite iterations $N$ would decrease. Finally, notice that the number of samples scales exponentially in the intrinsic dimension $\msf k \ll d$ instead of the ambient dimension $d$. This is, of course, unavoidable, but improves upon recent work that uses kernel density estimators to prove a similar result for denoising diffusion probabilistic models \citep{wibisono2024optimal}.

\begin{remark}
Recently, \citet{huang2024one} also proposed \eqref{eq:estdrift_follmer} to estimate the F{\"o}llmer drift. They provide no statistical estimation guarantees of the drift, nor any sampling guarantees; their contributions are largely empirical, demonstrating that the proposed estimator is tractable for high-dimensional tasks. The work of \cite{huang2021schr} also proposes an estimator for the F{\"o}llmer bridge based on having partial access to the log-density ratio of the target distribution (without the normalizing constant). 
\end{remark}

\section{Numerical performance}\label{sec:numerics}
Our approach is summarized in \cref{alg:sinkhornbridges}, and open-source code for replicating our experiments is available at \href{https://github.com/APooladian/SinkhornBridge}{https://github.com/APooladian/SinkhornBridge.}\footnote{Our estimator is implemented in both the \href{https://pythonot.github.io/}{POT} and \href{https://ott-jax.readthedocs.io/en/latest/}{OTT-JAX} frameworks.} 

For a fixed regularization parameter $\eps > 0$, the runtime of computing $(\hat{f},\hat{g})$ on the basis of samples has complexity $\cO(mn/(\eps\delta_{\text{tol}}))$, where $\delta_{\text{tol}}$ is a required tolerance parameter that measures how closely the the marginal constraints are satisfied \citep{cuturi2013sinkhorn,PeyCut19,altschuler2021asymptotics}. Once these are computed, the evaluation of $\hat{b}_{k\eta}$ is $\cO(n)$, with the remaining runtime being the number of iteration steps, denoted by $N$. In all our experiments, we take $m=n$, thus the total runtime complexity of the algorithm is a fixed cost of $\cO(n^2/(\eps\delta_{\text{tol}})$, followed by $\cO(nN)$ for each new sample to be generated (which can be parallelized). 

\begin{algorithm}[t]
\caption{Sinkhorn bridges}
\begin{algorithmic}
\State \textbf{Input:} Data $\{X_{i}\}_{i=1}^m \sim \mu$, $\{Y_{j}\}_{j=1}^n \sim \nu$, parameters $\eps > 0$, $\tau \in (0,1)$, and $N \geq 1$
\State \textbf{Compute:} Sinkhorn potentials $(\hat{f},\hat{g}) \in \R^m \times \R^n$ \Comment{Using \href{https://pythonot.github.io/}{POT} or \href{https://ott-jax.readthedocs.io/en/latest/}{OTT}}
\State \textbf{Initialize:} $x^{(0)} = x \sim \mu$, $k = 0$, stepsize $\eta = 
\tau/N$
\While{$k \le N-1$}
\State $x^{(k+1)} = x^{(k)} + \eta \hat{b}_{k\eta}(x^{(k)}) + \sqrt{\eta\eps}\xi$ \Comment{$\xi \sim \cN(0,I)$} 
\State $k \gets k+1$
\EndWhile
\State \textbf{Return:} $x^{(N)}$ 
\end{algorithmic}\label{alg:sinkhornbridges}
\end{algorithm}

\subsection{Qualitative illustration}
As a first illustration, we consider standard two-dimensional datasets from the machine learning literature. For all examples, we use $n=2000$ training points from both the source and target measure, and run Sinkhorn's algorithm with $\eps = 0.1$. For generation, we set $\tau=0.9$, and consider $N=50$ Euler--Maruyama steps. \Cref{fig:sb_toydata} contains the resulting simulations, starting from out-of-sample points. We see reasonable performance in each case.

\begin{figure}[t]
\centering
    \begin{subfigure}{\textwidth}            
            \includegraphics[width=\textwidth]{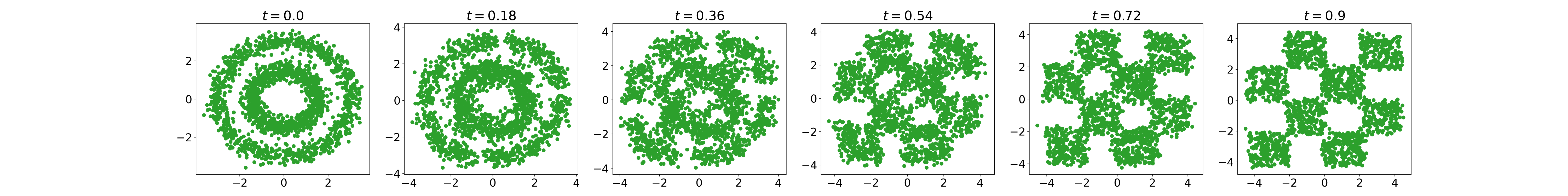}
    \end{subfigure}\\
     \begin{subfigure}{\textwidth}
            \centering
            \includegraphics[width=\textwidth]{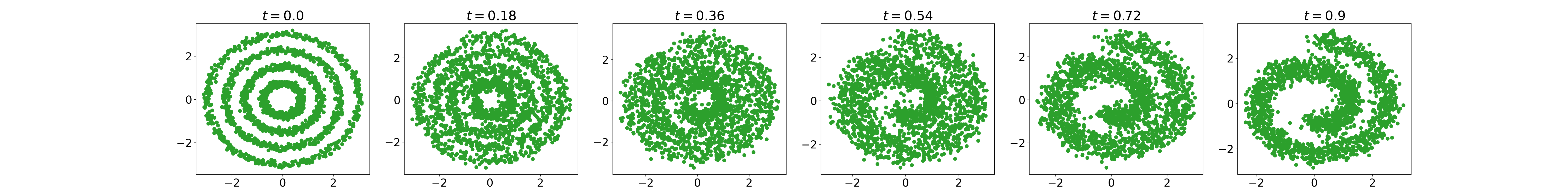}
    \end{subfigure}\\
     \begin{subfigure}{\textwidth}
            \centering
            \includegraphics[width=\textwidth]{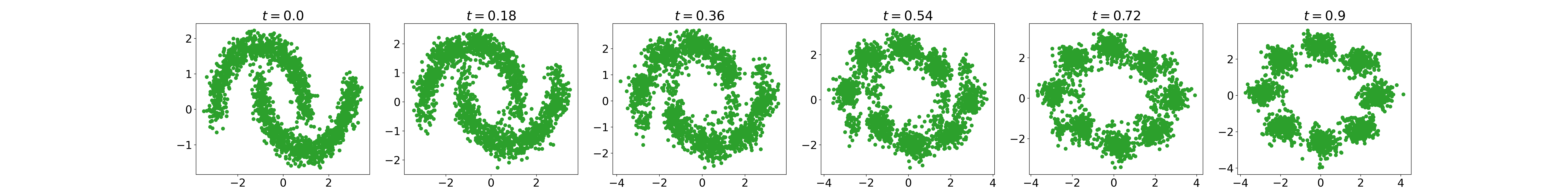}
    \end{subfigure}
    \caption{Schr{\"o}dinger bridges on the basis of samples from toy datasets. }
    \label{fig:sb_toydata}
\end{figure}

\subsection{Quantitative illustrations}\label{sec:}
We quantitatively assess the performance of our estimator using synthetic examples from the deep learning literature \citep{bunne2023schrodinger,gushchin2023building}.

\subsubsection{The Gaussian case}\label{sec:gaussian_bridge}
We first demonstrate that we are indeed learning the drift and that the claimed rates are empirically justified. As a first step, we consider the simple case where $\mu = \cN(a,A)$ and $\nu = \cN(b,B)$ for two positive-definite $d \times d$ matrices $A$ and $B$ and arbitrary vectors $a,b\in\R^d$. In this regime, the optimal drift $b_\tau^\star$ and $\msf p_\tau^\star$ has been computed in closed-form by \citet{bunne2023schrodinger}; see equations (25)-(29) in their work.

To verify that we are indeed learning the drift, we first draw $n$ samples from $\mu$ and $\nu$, and compute our estimator, $\hat{b}_\tau$ for any $\tau \in [0,1)$. We then evaluate the mean-squared error
\begin{align*}
    \mathrm{MSE}(n,\tau) = \|\hat{b}_\tau - b_\tau^\star\|^2_{L^2(\msf p_\tau^\star)}\,,
\end{align*}
by a Monte Carlo approximation, with $n_{\mathrm{MC}} = 10000$. For simplicity, with $d=3$, we choose $A = I$ and randomly generate a positive-definite matrix $B$, and center the Gaussians. We fix $\eps=1$ and vary $n$ used to define our estimator, and perform the simulation ten times to generate error bars across various choices of $\tau \in [0,1)$; see \cref{fig:gaussian_drift}.

\begin{figure}[t]
    \centering
\includegraphics[width=0.65\linewidth]{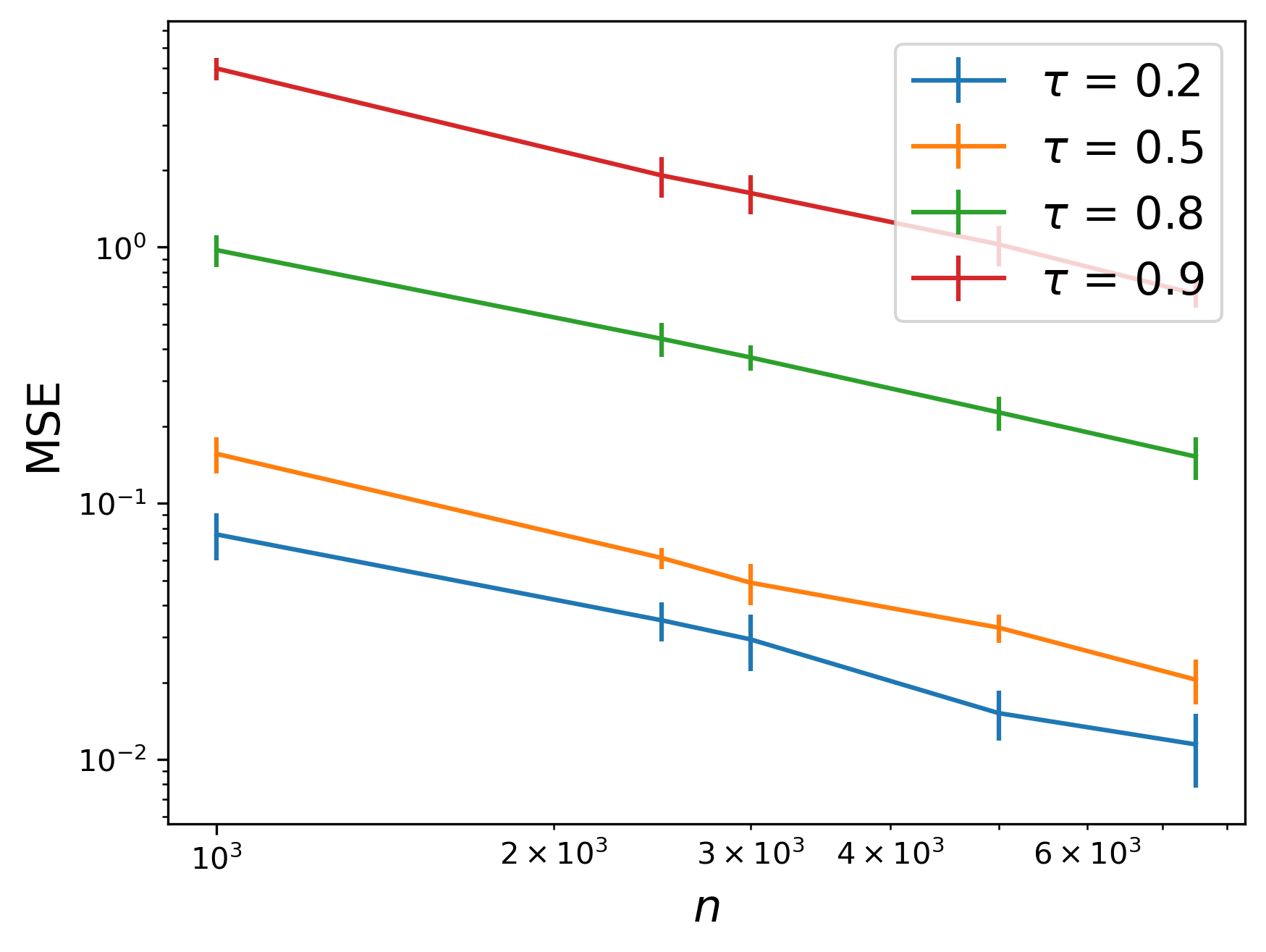}
    \caption{MSE for estimating the Gaussian drift as $(n,\tau)$ vary, averaged over 10 trials.}
    \label{fig:gaussian_drift}
\end{figure}

It is clear from the plot that the \emph{constant} associated to the rate of estimation gets worse as $\tau \to 1$, but the overall rate of convergence appears unchanged, which hovers around $n^{-1}$ for all choices of $\tau$ shown in the plot, as expected from e.g., \cref{prop:kl_path_prop}.

\subsubsection{Multimodal measures with closed-form drift}\label{sec:gmm}
The next setting is due to \citet{gushchin2023building}; they devised a drift that defines the Schr{\"o}dinger bridge between a Gaussian and a more complicated measure with multiple modes. This explicit drift allowed them to benchmark multiple neural network based methods for estimating the Schr{\"o}dinger bridge for non-trivial couplings (e.g., beyond the Gaussian to Gaussian setting). We briefly remark that the approaches discussed in their work fall under the ``continuous estimation" paradigm, where researchers assume they can endlessly sample from the distributions when training (using new samples per training iteration).

We consider the same pre-fixed drift as found in their publicly available code, which transports the standard Gaussian to a distribution with four modes. We consider the case $d=64$ and $\eps = 1$, as these hyperparameters are most extensively studied in their work, where they provide the most details on the other models. We use $n = 4096$ training samples from the source and target data they construct (which is significantly less than the total number of samples required for any of the neural network based models) and perform our estimation procedure, and we take $N=100$ discretization steps (which is half as many as most of the works they consider) to simulate to time $\tau=0.99$. To best illustrate the four mixture components, \cref{fig:gmm_1and15} contains a scatter plot of the first and fifteenth dimension, containing fresh target samples and our generated samples. 

\begin{figure}[t]
    \centering
    \begin{minipage}{0.45\textwidth}
        \centering
        \includegraphics[width=\linewidth]{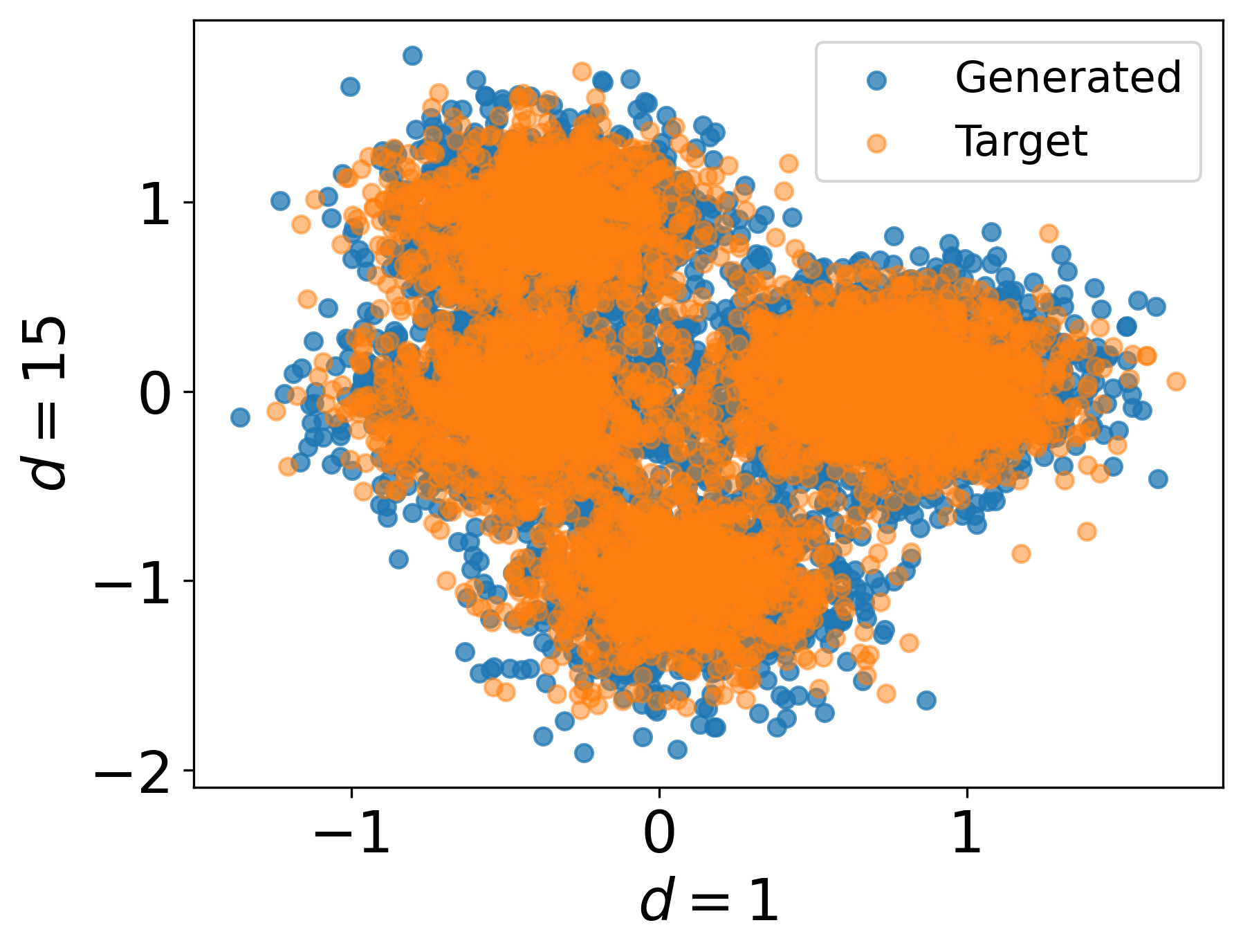} 
        \caption{Plotting generated and resampled target data in $d=64$.}
        \label{fig:gmm_1and15}
    \end{minipage}\hfill
    \begin{minipage}{0.45\textwidth}
        \centering
        \begin{tabular}{c|c}
    \begin{tabular}{c|ccccc}
       Method  & BW-UVP \\
       \hline
       Ours & \textbf{0.41 $\pm$ 0.03} \\
       MLE-SB & 0.56  \\
       EgNOT & 0.85 \\ 
       FB-SDE-A& 0.65 
    \end{tabular}
        \end{tabular}
        \captionof{table}{Comparison to neural network approaches in BW-UVP for $d=64$.}
        \label{tab:bw_uvp}
    \end{minipage}
\end{figure}

We compare to the ground-truth samples using the unexplained variance percentage (UVP) based on the Bures--Wasserstein distance \citep{bures1969extension}: 
\begin{align*}
    \mu \mapsto \text{BW-UVP}_\nu(\mu) \defeq 100\frac{\text{BW}^2(\cN_\mu,\cN_\nu)}{0.5\cdot\text{Var}(\nu)}\,,
\end{align*}
where $\cN_\mu = \cN(\E_\mu[X],\text{Cov}_\mu(X))$, and same for $\cN_\nu$.\footnote{For us, these quantities are computed on the basis of samples.} While seemingly ad hoc, the BW-UVP is widely used in the machine learning literature as a means of quantifying the quality of the generated samples (see e.g., \citet{daniels2021score}). We compute the BW-UVP with $10^4$ generated samples from the target and our approach, averaged over 5 trials, and used the results of \citet{gushchin2023building} for the remaining methods (MLE-SB is by \citet{vargas2021solving}, EgNOT is by \citet{mokrov2023energy}, and FB-SDE-A is by \citet{chen2021likelihood}). We see that the Sinkhorn bridge has significantly lower BW-UVP compared to the other approaches while requiring less compute resources and training data.

\section{Conclusion}\label{sec:conclusion}
This work makes a connection between the static entropic optimal transport problem, the Schr{\"o}dinger bridge problem, and Sinkhorn's algorithm, which appeared to be lacking in the literature. We proposed and analyzed a plug-in estimator of the Schr{\"o}dinger bridge, which we call the Sinkhorn bridge. Due to a Markov property enjoyed by entropic optimal couplings, our estimator relates Sinkhorn's matrix-scaling algorithm to the optimal drift that arises in the Schr{\"o}dinger bridge problem, and existing theory in the statistical optimal transport literature provide us with statistical guarantees. A novelty of our approach is the reduction of a ``dynamic" estimation problem to a ``static" one, where the latter is easy to analyze. 

Several questions arise from our work, we highlight some here:

\begin{description}
    \item \textbf{Further connections to other processes:} Our arguments for the Schr\"odinger bridge used the particular form of the reversible Brownian motion. It would be interesting to develop this approach for other types of reference processes for the purposes of developing statistical guarantees.     	
The Sinkhorn bridge estimator can also be implemented through an ordinary differential equation (ODE) and not necessarily through an SDE. This gives rise to the \emph{probability flow} ODE in the generative modeling literature \citep{song2020score}. \cite{chen2024probability} showed that this approach can achieve results comparable to those obtained by diffusion models \citep{chen2022sampling,lee2023convergence}.
    	We anticipate analogous results would hold in our setting. 
    \item \textbf{Lower bounds:} Entropic optimal transport suffers from a dearth of lower bounds in the literature. It is unclear whether our approach is optimal in terms of its dependence on $\eps$ and $\tau$. Developing estimators with better performance or nontrivial lower bounds would help establish how far our estimators are from optimality.
    \item \textbf{Computation in practice:} On the computational side, one can ask if are there better estimators of the drift $b_{t}^\star$ than the plug-in estimator we outlined (possibly amenable to statistical analysis), and to consider using our estimator on non-synthetic problems. For example, it seems advisable to compute the Sinkhorn bridge in a latent space, and reverting the latent transformation later \citep{rombach2022high}.
    \end{description}

\section*{Acknowledgements}
AAP thanks NSF grant DMS-1922658 and Meta AI Research for financial support. JNW is supported by the Sloan Research Fellowship and NSF grant DMS-2339829.

\bibliography{main}

\appendix
\section{Dynamic entropic optimal transport}\label{app:twoformulations}
\subsection{Connecting the two formulations}
In this section, we reconcile (at a formal level) two versions of the dynamic formulation for entropic optimal transport. We will start with \eqref{eq:dyneot2} and show that this is equivalent to \eqref{eq:dyneot} by a reparameterization.

We begin by recognizing that $\Delta\msf p_t = \nabla \cdot (\msf p_t \nabla \log \msf p_t)$, which allows us to write the Fokker--Planck equation as 
\begin{align}\label{eq:new_cont}
\partial_t \msf p_t + \nabla \cdot ((v_t - \tfrac{\eps}{2}\nabla \log \msf p_t)\msf p_t) = 0\,,
\end{align}
Inserting $b_t \defeq v_t - \tfrac{\eps}{2}\nabla \log \msf p_t$ into \eqref{eq:dyneot2}, we expand the square and arrive at
\begin{align*}
\inf_{(\msf p_t,b_t)} \int_0^1 \!\! \int \bigl(\frac12\|b_t(x)\|^2 + \frac{\eps^2}{8}\|\nabla \log \msf p_t(x)\|^2 + \frac{\eps}{2} b_t^\top \nabla \log \msf p_t\bigr) \msf p_t(x) \dd x \dd t\,.
\end{align*}
Up to the cross-term, this aligns with \eqref{eq:dyneot}; it remains to eliminate the cross term. Using integration-by-parts and \eqref{eq:new_cont}, we obtain
\begin{align*}
\int_0^1\!\!\int (b_t\msf p_t)^\top \nabla \log \msf p_t \dd x \dd t &= -\int_0^1\!\!\int \nabla \cdot (b_t\msf p_t) \log \msf p_t \dd x \dd t = \int_0^1\!\!\int  (\partial_t\msf p_t) \log\msf p_t \dd x \dd t\,.
\end{align*}
Though, we have (by product rule) the equivalence
\begin{align*}
\partial_t(\msf p_t \log \msf p_t) - \partial_t \msf p_t = (\partial_t \msf p_t) \log\msf p_t\,.
\end{align*}
Exchanging partial derivatives under the integral, this yields the following simplification
\begin{align*}
\int_0^1\!\!\int  (\partial_t\msf p_t) \log\msf p_t \dd x \dd t &= \int_0^1\!\!\int  \partial_t(\msf p_t\log\msf p_t) \dd x \dd t - \int_0^1\!\!\int  \partial_t\msf p_t \dd x \dd t \\
&= \int_0^1\!\! \partial_t \int  \msf p_t\log\msf p_t \dd x \dd t - \int_0^1\!\!  \partial_t \int\msf p_t \dd x \dd t  \\
&= \int_0^1\!\! \partial_t \cH(\msf p_t) \dd t + 0\\ 
&= \cH(\msf p_1) - \cH(\msf p_0)\,,
\end{align*}
where $\msf p_1 = \nu$ and $\msf p_0=\mu$. We see that \eqref{eq:dyneot2} is equivalent to
\begin{align*}
\frac{\eps}{2}(\cH(\nu) - \cH(\mu)) + \inf_{(\msf p_t,b_t)} \int_0^1 \!\! \int \Bigl(\frac12\|b_t(x)\|^2 + \frac{\eps^2}{8}\|\nabla \log \msf p_t(x)\|^2 \Bigr)\msf p_t(x)\dd x \dd t\,.
\end{align*}

\subsection{Connecting Markov processes and entropic Brenier maps}\label{sec:proof_selfconsistency}
Here we prove \cref{prop:selfconsistency}. To continue, we require the following lemma.
\begin{lemma}\label{lem:cond_indep}
	Fix any $t \in [0, 1]$.
	Under $\msf M$, the random variables $X_0$ and $X_1$ are conditionally independent given $X_t$.
\end{lemma}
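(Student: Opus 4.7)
The strategy is to write down the joint density of $(X_0, X_1, X_t)$ under $\msf M$ explicitly and exhibit a factorization of the form $p(x_0, x_1, z) = A(x_0, z)\, B(x_1, z)$. Such a factorization is equivalent to conditional independence of $X_0$ and $X_1$ given $X_t = z$, by the standard characterization of conditional independence via product densities.

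First I would unravel the definition of $\msf M$. By construction, $(X_0, X_1) \sim \pi$ and, conditionally on $(X_0, X_1) = (x_0, x_1)$, the value $X_t$ is distributed as the marginal of a Brownian bridge, namely $\cN(z \mid (1-t) x_0 + t x_1,\; t(1-t)\eps)$. Using the explicit expression for $\pi$ in \eqref{eq:pi_generic}, the joint density of $(X_0, X_1, X_t)$ with respect to $\mu_0 \otimes \mu_1 \otimes \mathrm{Leb}$ equals
\begin{align*}
    p(x_0, x_1, z) \;\propto\; \exp\!\Bigl(\tfrac{f(x_0) + g(x_1)}{\eps}\Bigr)\, \exp\!\Bigl(-\tfrac{\|x_0 - x_1\|^2}{2\eps}\Bigr)\, \exp\!\Bigl(-\tfrac{\|z - (1-t)x_0 - tx_1\|^2}{2 t(1-t)\eps}\Bigr),
\end{align*}
with a normalizing constant depending only on $\eps, t, d$.

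The key step is the classical Brownian bridge identity (an easy algebraic expansion)
\begin{align*}
    \tfrac{\|x_0 - x_1\|^2}{2\eps} + \tfrac{\|z - (1-t) x_0 - t x_1\|^2}{2 t(1-t)\eps} \;=\; \tfrac{\|z - x_0\|^2}{2 t \eps} + \tfrac{\|z - x_1\|^2}{2 (1-t) \eps},
\end{align*}
which decouples the $x_0$-$x_1$ cross-term. Substituting this in yields
\begin{align*}
    p(x_0, x_1, z) \;\propto\; \underbrace{\exp\!\Bigl(\tfrac{f(x_0)}{\eps} - \tfrac{\|z - x_0\|^2}{2 t \eps}\Bigr)}_{=: A(x_0, z)} \cdot \underbrace{\exp\!\Bigl(\tfrac{g(x_1)}{\eps} - \tfrac{\|z - x_1\|^2}{2 (1-t) \eps}\Bigr)}_{=: B(x_1, z)}.
\end{align*}
Dividing by the $z$-marginal (which is a function of $z$ alone) gives $p(x_0, x_1 \mid z) \propto A(x_0, z)\, B(x_1, z)$ as a density with respect to $\mu_0 \otimes \mu_1$, establishing the conditional independence claim for every fixed $z$ in the support of $\msf m_t$. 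The edge cases $t \in \{0, 1\}$ are trivial, since then $X_t$ equals $X_0$ or $X_1$ outright.

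I do not anticipate any genuine obstacles: the only nontrivial ingredient is the quadratic-completion identity above, which is a two-line calculation. One minor point of care is to make sure the factorization argument is phrased in terms of densities with respect to a product reference measure (here $\mu_0 \otimes \mu_1$), so that the conditional independence conclusion is unambiguous even though $\mu_0, \mu_1$ need not be absolutely continuous with respect to Lebesgue measure.
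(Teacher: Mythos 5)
Your proposal is correct and follows essentially the same route as the paper: compute the joint density of $(X_0, X_1, X_t)$ with respect to $\mu_0 \otimes \mu_1 \otimes \mathrm{Leb}$ and exhibit the factorization into a piece depending on $(x_0, z)$ and a piece depending on $(x_1, z)$. The paper compresses the quadratic-completion step into ``a calculation shows,'' while you spell out the Brownian bridge identity explicitly, but the argument and the resulting factors are the same.
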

\begin{proof}
	A calculation shows that the joint density of $X_0$, $X_1$, and $X_t$ with respect to $\mu_0 \otimes \mu_1 \otimes \mathrm{Lebesgue}$ equals
	\begin{multline*}
		\Lambda_\eps \Lambda_{t(1-t)\eps}
		e^{- \tfrac{1}{2\eps t(1-t)} \|x_t - ((1-t)x_0 + t x_1)\|^2} e^{(f(x_0) + g(x_1) - \tfrac12\|x_0-x_1\|^2)/\eps} \\= \msf F_t(x_t,x_0) \msf G_t(x_t,x_1)\,,
		\end{multline*}
		where
		\begin{align*}
		\msf F_t(x_t,x_0)& = 
		\Lambda_{\eps t}
		e^{f(x_0)/\eps}e^{-\tfrac{1}{2\eps t}\|x_t - x_0\|^2}\\ \msf G_t(x_t,x_1) &= 
		\Lambda_{(1-t)\eps}		
		e^{g(x_1)/\eps} e^{-\tfrac{1}{2\eps(1-t)}\|x_t - x_1\|^2}\,.
		\end{align*}
		Since this density factors, the law of $X_0$ and $X_1$ given $X_t$ is a product measure, proving the claim.
\end{proof}

\begin{proof}[Proof of \cref{prop:selfconsistency}]
First, we prove that $\msf M$ is Markov. Let $(X_t)_{t \in [0, 1]}$ be distributed according to $\msf M$.
It suffices to show that for any integrable $a \in \sigma(X_{[0,t]}), b \in \sigma(X_{[t,1]})$, we have the identity
\begin{equation*}
	\E[ab|X_t] = \E[a | X_t] \E[b | X_t] \quad \text{a.s.}
\end{equation*}
Using the tower property and the fact that, conditioned on $X_0$ and $X_1$, the law of the path is a Brownian bridge between $X_0$ and $X_1$, and hence is Markov, we have
\begin{align*}
	\E_{\msf M}[ab |X_t] & = \E[\E[ab |X_0, X_t, X_1] |X_t] \\
	& = \E[\E[a|X_0, X_t]\E[b|X_t, X_1] | X_t]\,.
\end{align*}
By~\cref{lem:cond_indep}, the sigma-algebras $\sigma(X_0,X_t)$ and $\sigma(X_t, X_1)$ are conditionally independent given $X_t$, hence
\begin{equation*}
	\E[\E[a|X_0, X_t]\E[b|X_t, X_1] | X_t] = \E[\E[a|X_0, X_t]|X_t] \E[\E[b|X_0, X_t]|X_t] = \E[a | X_t] \E[b | X_t]\,,
\end{equation*}
as claimed.

The proof of the second statement follows directly from the computations presented below~\eqref{eq:marginal-density}, which hold under no additional assumptions.

We now prove the third statement. Following the approach of~\cite{Fol85}, the representation of $\msf M$ as a mixture of Brownian bridges shows that the law of $X_{[0, t]}$ for any $t < 1$ has finite entropy with respect to the law of $X_0 + \sqrt{\eps} B_t$, for $X \sim \mu_0$.
Hence, to verify the representation in terms of the SDE, it suffices to compute the stochastic derivative:
\begin{equation*}
	\lim_{h \to 0} \frac 1h \EE[X_{t + h} - X_t | X_{[0,t]}]\,,
\end{equation*}
where the limit is taken in $L^2$.
Using the the fact that the process is Markov and, conditioned on $X_0$ and $X_1$, the path is a Brownian bridge, we obtain
\begin{align*}
	\lim_{h \to 0} \frac 1h \EE[X_{t + h} - X_t | X_{[0,t]}] & = \lim_{h \to 0} \frac 1h  \EE[\EE[X_{t+h}- X_t|X_0, X_t, X_1]|X_t]  \\
	& = \frac1{1-t}\EE[X_1 - X_t |X_t]\,.
\end{align*}
Recalling the computations in \cref{lem:cond_indep}, we observe that, conditioned on $X_t = x_t$, the variable $X_1$ has $\mu_1$ density proportional to $\msf G_t(x_t,x_1)$.
Since $\pi$ is a probability measure, in particular we have that $e^{g}$ lies in $L^1(\mu_1)$.
We can therefore apply dominated convergence to obtain
\begin{align*}
	\frac1{1-t}\EE[X_1 - X_t |X_t=x_t] = \frac{\int \tfrac{x_1 - x_t}{1-t} \msf G_t(x_t, x_1) \mu_1({\rm d} x_1)}{\int  \msf G_t(x_t, x_1) \mu_1({\rm d} x_1) } = \eps \nabla \log \cH_{(1-t)\eps}[\exp(g/\eps)\mu_1](x_t)\,,
\end{align*}
as desired.

For the fourth statement, we require the following claim.\\
\textbf{Claim:} The joint probability measure ${\pi}_{t}(z,x_1)$, defined as
\begin{align*}
\exp((-(1-t)f_{1-t}(z) + (1-t)g(x_1) - \tfrac12\|z - x_1\|^2))/((1-t)\eps))\msf{m}_{t}({\rm d}z)\mu_1({\rm d}x_1)\,,
\end{align*}
is the optimal entropic coupling from $\msf m_t$ to $\rho$ with regularization parameter $(1-t)\eps$, where $f_{1-t}(z) \defeq \eps \log \cH_{(1-t)\eps}[e^{g/\eps}\mu_1](z)$.
Under this claim, it is easy to verify that the definition of $\nabla \phi_{1-t}$ is precisely this conditional expectation, which concludes the proof. 

To prove the claim, we notice that $\pi_t$ is already in the correct form of an optimal entropic coupling, and ${\pi}_{t} \in \Gamma(\msf{m}_{t},?)$ by construction. Thus, it suffices to only check the second marginal. By the second part, above, we have that
\begin{align*}
    \msf m_t(z) = \cH_{(1-t)\eps}[\exp(g/\eps)\mu_1](z)\cH_{t\eps}[\exp(f/\eps)\mu_0](z)\,.
\end{align*}
Integrating, performing the appropriate cancellations, and applying the semigroup property, we have 
\begin{align*}
    \int \pi_{t}(z,{\rm d}x_1) \dd z &= e^{g(x_1)/\eps}\mu_1({\rm d}x_1)\cH_{(1-t)\eps}[\cH_{t\eps}[e^{f/\eps}\mu_0]](x_1) \\
    &= e^{g(x_1)/\eps}\mu_1({\rm d}x_1)\cH_{\eps}[e^{f/\eps}\mu_0](x_1)\,,
\end{align*}
which proves the claim. 
\end{proof}

\section{Proofs for Section \ref{sec:proof_sketch}}\label{app:proofs}
\subsection{One-sample analysis}
\begin{proof}[Proof of \cref{prop:kl_path_prop}]
First, we recognize that a path with law $\tilde{\msf P}$ (resp. $\bar{\msf P}$) can be obtained by sampling a Brownian bridge between $(X_0, X_1) \sim \pi_n$ (resp. $\bar{\pi}_n$), by \cref{prop:selfconsistency}. Thus, by the data processing inequality,
\begin{align*}
    \E[\kl{\tilde{\msf P}_{[0,\tau]}}{\bar{\msf P}_{[0,\tau]}}] & \leq \E[\kl{\tilde{\msf P}}{\bar{\msf P}}] \\
    & \leq \E[\kl{\pi_n}{\bar{\pi}_n}] \\
    &= \E\Bigl[\int \log(\pi_n/\bar{\pi}_n)\dd \pi_n\Bigr]\,,
\end{align*}
where the above manipulations are valid as both $\pi_n$ and $\bar{\pi}_n$ have densities with respect to $\mu \otimes \nu_n$. Completing the expansion by explicitly writing out the densities, we obtain
\begin{align*}
    \E[\kl{\tilde{\msf P}_{[0,\tau]}}{\bar{\msf P}_{[0,\tau]}}] &\leq \frac{1}{\eps}\E\Bigl[\int (\hat f + \hat g - \bar{f} - g^\star) \dd \pi_n\Bigr] \\
    &= \frac{1}{\eps}\E[\OTeps(\mu,\nu_n) - \int \bar{f}\dd \mu - \int g^\star \dd \nu_n]\,.
\end{align*}
We now employ the rounding trick of \citet{stromme2023minimum}: the rounded potential $\bar f$ satisfies
\begin{equation*}
	\bar f = \argmax_{f \in L^1(\mu)} \Phi^{\mu \nu_n}(f, g^\star)\,;
\end{equation*}
Therefore, in particular, $ \Phi^{\mu \nu_n}(\bar f, g^\star) \geq  \Phi^{\mu \nu_n}(f^\star, g^\star)$.
Continuing from above, we obtain
\begin{align*}
    \E[\kl{\tilde{\msf P}_{[0,\tau]}}{\bar{\msf P}_{[0,\tau]}}] &\leq \frac{1}{\eps}\E[\OTeps(\mu,\nu_n) - \int f^\star\dd \mu - \int g^\star \dd \nu_n] \\
    &= \frac{1}{\eps}\E[\OTeps(\mu,\nu_n) - \int f^\star\dd \mu - \int g^\star \dd \nu] \\
    &= \frac{1}{\eps}\E[\OTeps(\mu,\nu_n) - \OTeps(\mu,\nu)]\,,
\end{align*}
where in the penultimate equality we observed that $g$ is independent of the data $Y_1,\ldots,Y_n$. Combined with Theorem 2.6 of \citet{groppe2023lower}, the proof is complete.
\end{proof}

\begin{proof}[Proof of \cref{prop:kl_grisanov_prop}]
    We start by applying Girsanov's theorem to obtain a difference in the drifts, which can be re-written as differences in entropic Brenier maps:
    \begin{align}\label{eq:final_integrand}
    \begin{split}
        \E[\kl{{\msf P}_{[0,\tau]}^\star}{\bar{\msf P}_{[0,\tau]}}] &\leq \int_0^\tau \E \|\bar{b}_t - b_t^\star\|^2_{L^2(\msf p_t)} \dd t\\
        &= \int_0^\tau(1-t)^{-2}\E\|\nabla \bar{\phi}_{1-t} - \nabla \phi_{1-t}^\star\|^2_{L^2(\msf p_t)}\dd t\,.
    \end{split}
    \end{align}
    {The result then follows from \cref{lem:pointwise}, where we lazily bound the resulting integral:
    \begin{align*}
        \E[\kl{{\msf P}_{[0,\tau]}^\star}{\bar{\msf P}_{[0,\tau]}}] &\leq \frac{R^2 \eps^{-\msf k}}{n}\int_0^\tau (1-t)^{-\msf k - 2} \dd t\\ 
        &\leq \frac{R^2 \eps^{-\msf k}}{n}(1-\tau)^{-\msf k - 2}\,.
    \end{align*}}
\end{proof}

\begin{lemma}[Point-wise drift bound]\label{lem:pointwise}
Under the assumptions of \cref{prop:kl_grisanov_prop}, let $\bar{\phi}_{1-t}$ be the entropic Brenier map between $\bar{\msf p}_t$ and $\bar{\nu}_n$ and $\phi_{1-t}^\star$ be the entropic Brenier map between ${\msf p}_t^\star$ and $\nu$, both with regularization parameter $(1-t)\eps$. Then  
\begin{align*}
    \E\|\nabla \bar{\phi}_{1-t} - \nabla \phi_{1-t}^\star\|^2_{L^2(\msf p_t)} \lesssim \frac{R^2}{n}((1-t)\eps)^{-\msf k}\,.
\end{align*}
\end{lemma}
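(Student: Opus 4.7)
My plan is to represent both entropic Brenier maps as barycentric projections under the unnormalised kernel
\[
K_z(y) \defeq \exp\bigl((g^\star(y) - \tfrac{1}{2(1-t)}\|z-y\|^2)/\eps\bigr)\,.
\]
Writing $B(z)\defeq\int K_z(y)\dd\nu(y)$ and $B_n(z)\defeq\int K_z(y)\dd\nu_n(y)$, the two maps are $\nabla\phi^\star_{1-t}(z)=\int y K_z(y)\dd\nu(y)/B(z)$ and the analogous empirical ratio for $\nabla\bar\phi_{1-t}(z)$. The central observation is that subtracting the constant $\nabla\phi^\star_{1-t}(z)$ inside the integrand leaves the population integral unchanged, since by definition $\int (y-\nabla\phi^\star_{1-t}(z))K_z(y)\dd\nu(y)=0$. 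This rearranges the difference of maps into a centred empirical process:
\begin{equation*}
\nabla\bar\phi_{1-t}(z) - \nabla\phi^\star_{1-t}(z) \;=\; \frac{1}{B_n(z)}\int (y-\nabla\phi^\star_{1-t}(z))\,K_z(y)\,\dd(\nu_n-\nu)(y)\,.
\end{equation*}
The integral on the right is an i.i.d.\ mean-zero empirical average whose summands are bounded in norm by $2R\,K_z(Y_j)$, since both $\|Y_j\|$ and $\|\nabla\phi^\star_{1-t}(z)\|$ are at most $R$ under the support hypothesis.

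From here, a direct variance calculation on the right-hand side gives an expected squared norm at most $4R^2 n^{-1}\E_\nu[K_z(Y)^2]$. To handle the (possibly small) denominator $B_n(z)$, I would split on the event $\{B_n(z)\geq B(z)/2\}$: on this good event $B_n(z)^{-2}\leq 4B(z)^{-2}$, while on its complement -- which by Chebyshev has probability at most $4\E_\nu[K_z(Y)^2]/(nB(z)^2)$ -- the two maps differ by at most $2R$ in norm since both take values in the convex hull of $\supp\nu$. Combining the two pieces yields the pointwise estimate
\begin{equation*}
\E\|\nabla\bar\phi_{1-t}(z) - \nabla\phi^\star_{1-t}(z)\|^2 \lesssim \frac{R^2\,\E_\nu[K_z(Y)^2]}{n\,B(z)^2}\,.
\end{equation*}

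The remaining task is to integrate this pointwise bound against $\msf p_t=\msf p_t^\star$, and this is the main obstacle. The factorisation $\msf p_t(z) = \Lambda_{(1-t)\eps} B(z) q(z)$ from \cref{prop:selfconsistency} (with $q(z)=\cH_{t\eps}[\exp(f^\star/\eps)\mu](z)$) reduces the task to controlling $\Lambda_{(1-t)\eps}\int \E_\nu[K_z(Y)^2]\,q(z)/B(z)\,\dd z$ by $((1-t)\eps)^{-\msf k}$. The intrinsic-dimension exponent must emerge from the hypothesis that $\supp\nu$ lies in a $\msf k$-dimensional submanifold inside $B(0,R)$: for $z$ in a Gaussian tubular neighbourhood of the support, standard manifold heat-kernel estimates yield $B(z)\gtrsim((1-t)\eps)^{\msf k/2}$ and $\E_\nu[K_z(Y)^2]\lesssim((1-t)\eps)^{\msf k/2}$, with constants absorbed via the boundedness of $g^\star$ (a consequence of the compact support of $\nu$), while far-from-support contributions are exponentially suppressed by the Gaussian factors in $K_z$ and $q$. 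Making these local estimates uniform through a covering or local-chart argument on the manifold -- in the spirit of the intrinsic-dimension analyses of \citet{stromme2023minimum,groppe2023lower} in the static setting -- is the delicate part of the argument and delivers the claimed bound.
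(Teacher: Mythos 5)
Your decomposition is a genuine alternative to the paper's, and the pointwise part is sound, but you stop one step short of closing the argument.

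On the parts that work: your rewriting of the difference as a centred empirical process in the numerator, with random denominator $B_n(z)$, is correct (it relies exactly on the cancellation $\int(y-\nabla\phi^\star_{1-t}(z))K_z(y)\,\dd\nu(y)=0$). The variance bound on the numerator and the Chebyshev split on $\{B_n(z)\geq B(z)/2\}$ are both valid; the bad-event contribution is indeed controlled because $\nabla\bar\phi_{1-t}$ and $\nabla\phi^\star_{1-t}$ take values in the convex hull of $\supp\nu\subseteq B(0,R)$. This Chebyshev device is a more elementary way to handle the random normalisation than the paper's route, which introduces the anchor $n^{-1}\sum_j Y_j\gamma^\star_t(\cdot,Y_j)$ and controls the resulting denominator error through the gradient of the dual objective (\cref{prop:entropic_dual_grad} and \cref{lem:dual_gradient_bound}). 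So up to the pointwise estimate $\E\|\nabla\bar\phi_{1-t}(z)-\nabla\phi^\star_{1-t}(z)\|^2\lesssim R^2\E_\nu[K_z(Y)^2]/(nB(z)^2)$, you have a legitimate, and arguably cleaner, alternative.

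The gap is in the integration step, and it is a missed identification rather than a missing technique. You set out to prove $\Lambda_{(1-t)\eps}\int\E_\nu[K_z(Y)^2]\,q(z)/B(z)\,\dd z\lesssim((1-t)\eps)^{-\msf k}$ from scratch via manifold heat-kernel estimates, which you rightly flag as delicate; but you never need to. Observe that the density of the optimal entropic coupling $\pi_t^\star$ between $\msf p_t^\star$ and $\nu$ (with parameter $(1-t)\eps$) with respect to $\msf p_t^\star\otimes\nu$ is exactly $\gamma_t^\star(z,y)=K_z(y)/B(z)$: indeed, $\int\gamma_t^\star(z,y)\,\dd\nu(y)=1$ trivially, and the second-marginal constraint follows from the Schr\"odinger system for $g^\star$. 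Therefore your integrated bound is
\begin{equation*}
\frac{R^2}{n}\int\frac{\E_\nu[K_z(Y)^2]}{B(z)^2}\,\msf p_t^\star(\dd z)
=\frac{R^2}{n}\iint\gamma_t^\star(z,y)^2\,\msf p_t^\star(\dd z)\,\nu(\dd y)
=\frac{R^2}{n}\,\|\gamma_t^\star\|^2_{L^2(\msf p_t^\star\otimes\nu)}\,,
\end{equation*}
which is \emph{precisely} the quantity that both the paper and its source \citep[Lemma~16]{stromme2023minimum} bound by $((1-t)\eps)^{-\msf k}$ under the intrinsic-dimension hypothesis. In other words, the estimate you are trying to re-derive via a covering argument on the manifold is the cited $L^2$-density bound, available off the shelf. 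Recognising this closes the proof immediately and makes the heat-kernel sketch unnecessary; without that recognition, what you have written is an incomplete argument resting on an unproved ``delicate'' claim.
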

\begin{proof}
    Setting some notation, we express $\nabla\phi_{1-t}^\star$ as the conditional expectation of the optimal entropic coupling $\pi_t^\star$ between $\msf p_t^\star$ and $\nu$ (recall \cref{prop:selfconsistency}), where we write $\pi_t^\star(z,y) = \gamma_t^\star(z,y) \msf p_t^\star({\rm d}z) \nu({\rm d}y)$. 
    
    The rest of our proof follows a technique due to \cite{stromme2023minimum}: by triangle inequality, we can add and subtract the following term
    \begin{align*}
        \frac{1}{n}\sum_{j=1}^{n}Y_j \gamma_t^\star(z,Y_j)\,,
    \end{align*}
    into the integrand in \eqref{eq:final_integrand}, resulting in
    \begin{align}\label{eq:sumofmaps}
    \begin{split}
       \E\|\nabla \bar{\phi}_{1-t} - \nabla \phi^\star_{1-t}\|^2_{L^2(\msf p_t^\star)} &\lesssim \E\|\nabla \bar{\phi}_{1-t} - n^{-1}\textstyle\sum_{j=1}^{n}Y_j\gamma_t^\star(\cdot,Y_j)\|^2_{L^2(\msf p_t^\star)} \\
       &\qquad + \E\|n^{-1}\textstyle\sum_{j=1}^{n}Y_j \gamma_t^\star(\cdot,Y_j) - \nabla {\phi}_{1-t}^\star\|^2_{L^2(\msf p_t^\star)}\,.
    \end{split}
    \end{align}
    For the second term, with the same manipulations as \citet[Lemma 20]{stromme2023minimum}, we obtain a final bound of
    \begin{align*}
        \E\|n^{-1}{\textstyle\sum_{j=1}^{n}}Y_j \gamma_t^\star(\cdot,Y_j) - \nabla {\phi}_{1-t}^\star\|^2_{L^2(\msf p_t^\star)} = \frac{R^2}{n}\|\gamma_t^\star\|^2_{L^2(\msf p_t^\star \otimes \nu)} \leq \frac{R^2}{n}((1-t)\eps)^{-\msf k}\,,
    \end{align*}
    where the final inequality is also due to \citet[Lemma 16]{stromme2023minimum}. To control the first term in \eqref{eq:sumofmaps}, we also appeal to his calculations of the same theorem: observing that, from \eqref{eq:barbt}
    \begin{align*}
        \nabla\bar{\phi}_{1-t}(z) &= \frac{1}{n}\sum_{j=1}^n Y_j \frac{\exp((g^\star(Y_j) - \tfrac{1}{2(1-t)}\|z - Y_j\|^2)/\eps)}{\frac{1}{n}\sum_{j=1}^n\exp((g^\star(Y_j) - \tfrac{1}{2(1-t)}\|z - Y_j\|^2)/\eps)}\\
        &= \frac{1}{n}\sum_{j=1}^n Y_j\bar{\gamma}_t(z,Y_j)\,.
    \end{align*}
    Since the following equality is true
    \begin{align*}
        \bar{\gamma}_t(z,Y_j) = \frac{\gamma_t^\star(z,Y_j)}{\frac{1}{n}\sum_{k=1}^n\gamma_t^\star(z,Y_{k})}\,,
    \end{align*}
    we can verbatim apply the remaining arguments of \citet[Lemma 20]{stromme2023minimum}. Indeed, for fixed $x \in \R^d$, we have
    \begin{align*}
        \|n^{-1}{\textstyle\sum_{j=1}^{n}}Y_j(\gamma_t^\star(x,Y_j) - \bar{\gamma}_t(x,Y_j))\|^2 \leq R^2\bigl|{\textstyle\sum_{j=1}^{n}}\gamma_t^\star(x,Y_j) - 1  \bigr|^2\,.
    \end{align*}
    Taking the $L^2(\msf p_t^\star)$ norm and the outer expectation, we see that the remaining term is nothing but the first component of the gradient of the dual entropic objective function (see \cref{prop:entropic_dual_grad}), which can be bounded via \cref{lem:dual_gradient_bound}, resulting in the chain of inequalities
    \begin{align*}
        \E\|n^{-1}{\textstyle\sum_{j=1}^{n}}Y_j(\gamma_t^\star(\cdot,Y_j) - \bar{\gamma}_t(\cdot,Y_j))\|^2_{L^2(\msf p_t^\star)} \lesssim \frac{R^2}{n}\|\gamma_t^\star\|^2_{L^2(\msf p_t^\star \otimes \nu)} \leq \frac{R^2}{n}((1-t)\eps)^{-\msf k}\,, 
    \end{align*}
    where the last inequality again holds via \citet[][Lemma 16]{stromme2023minimum}.

\end{proof}

\subsection{Completing the results}

\begin{proof}[Proof of \cref{prop:disc_error}]
This proof closely follows the ideas of  \cite{chen2022sampling}. Applying Girsanov's theorem, we obtain
\begin{align*}
    \tvsq{\hat{\msf{P}}_{[0,\tau]}}{\tilde{\msf{P}}_{[0,\tau]}} &\lesssim \kl{\tilde{\msf{P}}_{[0,\tau]}}{\hat{\msf{P}}_{[0,\tau]}} = \sum_{k=0}^{N-1}\int_{k\eta}^{(k+1)\eta}\E_{\tilde{\msf{P}}_{[0,\tau]}}\| \hat{b}_{k\eta}(X_{k\eta}) - \hat{b}_{t}(X_t)\|^2 \dd t\,.
\end{align*}
Recall that $\eta \in (0,1)$ is a chosen step-size based on $N$, the number of steps to be taken. As in prior analyses, we hope to uniformly bound the integrand above for any $t \in [k\eta,(k+1)\eta]$. Adding and subtracting the appropriate terms, we have
\begin{align}\label{eq:error_decomp}
\begin{split}
    \E_{\tilde{\msf{P}}_{[0,\tau]}}\| \hat{b}_{k\eta}(X_{k\eta}) - \hat{b}_{t}(X_t)\|^2 &\lesssim \E_{\tilde{\msf{P}}_{[0,\tau]}}\|\hat{b}_{k\eta}(X_{k\eta}) -  \hat{b}_{t}(X_{k\eta})\|^2 \\
    &\qquad + \E_{\tilde{\msf{P}}_{[0,\tau]}}\| \hat{b}_{t}(X_{k\eta}) - \hat{b}_{t}(X_{t})\|^2\,. 
\end{split}
\end{align} 
By the semigroup property, we first notice that
\begin{align*}
    \cH_{1-k\eta}[e^{\hat{g}/\eps}\nu_n] = \cH_{t-k\eta}[\cH_{1-t}[e^{\hat{g}/\eps}\nu_n]]\,.
\end{align*}
We can verbatim apply Lemma 16 of \citet{chen2022sampling} with $\bm{q} \defeq  \cH_{1-t}[e^{\hat{g}/\eps}\nu_n]$, $\bm{M}_0 = \text{id}$ and $\bm{M}_1 = (t-k\eta)I$, since $\cH_{1-k\eta}[e^{\hat{g}/\eps}\nu_n] = \bm{q} * \cN(0,(t-k\eta)I)$. This gives
\begin{align*}
    \|\hat{b}_{k\eta}(X_{k\eta}) -  \hat{b}_{t}(X_{k\eta})\|^2 &= \Bigl\|\eps \nabla \log \frac{\bm{q} * \cN(0,(t-k\eta)I)}{\bm{q}}(X_{k\eta})\Bigr\|^2 \\
    &\lesssim L_t^2 \eta d + L_t^2\eta^2\|\eps\nabla \log \bm{q}(X_{kh})\|^2\,.
\end{align*}
Since $\eps \log \bm{q}$ is $L_t$-smooth, we obtain the bounds
\begin{align*}
    \E_{\tilde{\msf{P}}_{[0,\tau]}}\|\eps\nabla \log \bm{q}(X_{kh})\|^2 &\lesssim \E_{\tilde{\msf{P}}_{[0,\tau]}}\|\eps\nabla \log \bm{q}(X_{t})\|^2 + L_t^2\|X_t - X_{kh}\|^2 \\
    &\leq \eps L_td + L_t^2\E_{\tilde{\msf{P}}_{[0,\tau]}}\|X_t - X_{kh}\|^2\,.
\end{align*}
where the final inequality is a standard smoothness inequality (see \cref{lem:smoothness_lemma}). Similarly, the second term on the right-hand side of \eqref{eq:error_decomp} can be bounded by 
\begin{align*}
    \E_{\tilde{\msf{P}}_{[0,\tau]}}\| \hat{b}_{t}(X_{k\eta}) - \hat{b}_{t}(X_{t})\|^2 \leq L_t^2\E_{\tilde{\msf{P}}_{[0,\tau]}}\|X_{k\eta} - X_t\|^2.
\end{align*}
Combining the terms, we obtain
\begin{align*}
    \E_{\tilde{\msf{P}}_{[0,\tau]}}\| \hat{b}_{k\eta}(X_{k\eta}) - \hat{b}_{t}(X_t)\|^2 \lesssim \eps L_t^2 \eta d + L_t^2 \E_{\tilde{\msf{P}}_{[0,\tau]}}\|X_{k\eta} - X_t\|^2\,,
\end{align*}
where, to simplify, we use the fact that $\eta \leq 1/L_t$ (with $L_t \geq 1$), and that $\eta^2 \leq \eta$ for $\eta \in [0,1]$. We now bound the remaining expectation. Under $\tilde{\msf{P}}_{[0,\tau]}$, we can write
\begin{align*}
    X_{t} = \int_0^{t}\hat{b}_s(X_s)\dd s + \sqrt{\eps}B_{t}\,, X_{kh} = \int_0^{k\eta}\hat{b}_s(X_s)\dd s + \sqrt{\eps}B_{k\eta}\,,
\end{align*}
and thus
\begin{align*}
    X_t - X_{k\eta} = \int_{k\eta}^t \hat{b}_s(X_s)\dd s + \sqrt{\eps}(B_t - B_{k\eta})\,.
\end{align*}
Taking squared expectations, writing $\delta \defeq t - k\eta \leq \eta$ (recall that $t \in [k\eta,(k+1)\eta)$), we obtain (through an application of the triangle inequality and Jensen's inequality)
\begin{align*}
    \E_{\tilde{\msf{P}}_{[0,\tau]}}\|X_{t} - X_{k\eta}\|^2 &\lesssim \eps\E_{\tilde{\msf{P}}_{[0,\tau]}}\|B_{t} - B_{k\eta}\|^2 + \delta \int_{k\eta}^t \E_{\tilde{\msf{P}}_{[0,\tau]}}\|\hat{b}_s(X_s)\|^2 \dd s \\
    &\lesssim \eps \eta d + \delta^2 L_td \\
    &\leq (\eps + 1) \eta d 
\end{align*}
where we again used \cref{lem:smoothness_lemma}. Combining all like terms, we obtain the final result.

The estimates for the Lipschitz constant follow from \cref{lem:hessian_lemma}.
\end{proof}

\subsection{Proofs for Section \ref{sec:follmer_sampling}}\label{sec:follmer_calcs}
\subsubsection{Computing Equation \ref{eq:estdrift_follmer}}
The F{\"o}llmer drift is a special case of the Schr{\"o}dinger bridge, where $\mu = \delta_a$ for any $a \in \R^d$. Let $(f^{\msf F},g^{\msf F})$ denote the optimal entropic potentials in this setting. Note that they these potentials are defined up to translation (i.e., the solution is the same if we take $f^{\msf F} + c$ and $g^{\msf F} - c$ for any $c \in \R$). So, we further impose the condition that $f^{\msf F}(a) = 0 = c$. Then the optimality conditions yield
\begin{align}\label{eq:g_follmer}
    g^{\msf F}(y) = \frac{1}{2\eps}\|y\|^2\,.
\end{align}
Plugging this into the expression for the Schr{\"o}dinger bridge drift, we obtain
\begin{align*}
    b^{\msf F}_t(z) &= \eps \nabla \log \cH_{(1-t)\eps}[e^{\tfrac{1}{2\eps}\|\cdot\|^2}\nu](z) \\
    &= (1-t)^{-1}\Bigl(-z + \frac{\int y e^{\tfrac{1}{2\eps}\|y\|^2 - \tfrac{1}{2(1-t)\eps}\|z-y\|^2}\nu({\rm d}y)}{\int e^{\tfrac{1}{2\eps}\|y\|^2 - \tfrac{1}{2(1-t)\eps}\|z-y\|^2}\nu({\rm d}y)} \Bigr)\,.
\end{align*}
Replacing the integrals with respect to $\nu$ with their empirical counterparts yields the estimator.
\subsubsection{Proof of Proposition \ref{cor:sampling_follmer_prop}}
Our goal is to prove the following lemma.
\begin{lemma}
Let $\msf p_\tau$ be the F{\"o}llmer bridge at time $\tau \in [0,1)$ between $\mu = \delta_0$ and $\nu \in \cP_2(\R^d)$ with $\eps = 1$ and suppose the squared second moment of $\nu$ is bounded above by $d$. Then
\begin{align*}
    W_2^2(\msf p_\tau,\nu) \leq d(1-\tau)\,.
\end{align*}
\end{lemma}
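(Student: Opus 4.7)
The plan is to use the mixture-of-Brownian-bridges representation of the Schr\"odinger bridge furnished by \cref{prop:selfconsistency} (parts 1--2), which gives a completely explicit description of $\msf p_\tau$ in the F\"ollmer setting. First, I would observe that when $\mu = \delta_0$, the optimal entropic coupling $\pi^\star$ between $\delta_0$ and $\nu$ is forced to be $\delta_0 \otimes \nu$, since any element of $\Pi(\delta_0, \nu)$ must have this form. Consequently, the F\"ollmer bridge is the mixture of Brownian bridges starting at $0$ and ending at a draw $Y \sim \nu$. Since a Brownian bridge with volatility $\eps = 1$ from $0$ to $y$ satisfies $X_\tau \mid (X_0 = 0, X_1 = y) \sim \cN(\tau y, \tau(1-\tau) I)$, we may realize $\msf p_\tau$ as the law of
\begin{equation*}
    X_\tau = \tau Y + \sqrt{\tau(1-\tau)}\, Z, \qquad Y \sim \nu,\ Z \sim \cN(0, I)\ \text{independent}.
\end{equation*}

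Next, I would use the above joint construction to exhibit an explicit coupling of $\msf p_\tau$ and $\nu$: pair $X_\tau$ (as above) with the same $Y$ used in its construction. This coupling immediately yields
\begin{equation*}
    W_2^2(\msf p_\tau, \nu) \leq \EE \|X_\tau - Y\|^2 = \EE\bigl\|{-}(1-\tau) Y + \sqrt{\tau(1-\tau)}\, Z\bigr\|^2.
\end{equation*}
By independence of $Y$ and $Z$ and the fact that $\EE\|Z\|^2 = d$, this expands to $(1-\tau)^2 \EE\|Y\|^2 + \tau(1-\tau) d$. Applying the assumption $\EE\|Y\|^2 \leq d$ gives the bound $(1-\tau)^2 d + \tau(1-\tau) d = (1-\tau) d$, which is the claim.

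There is no real obstacle; the only step requiring any care is the identification of $\pi^\star$ as $\delta_0 \otimes \nu$, which however is immediate from the marginal constraint. Everything else is a one-line Gaussian variance calculation using independence.
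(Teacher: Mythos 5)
Your proof is correct. It arrives at the same coupling and the same one-line variance calculation as the paper, but by a genuinely different route. The paper proceeds via time-reversal: it writes $\msf p_\tau = \msf P_{1-\tau}$ where $\msf P$ is the reverse bridge from $\nu$ to $\delta_0$, invokes the known F\"ollmer SDE ${\rm d}Y_s = -\tfrac{Y_s}{1-s}{\rm d}s + {\rm d}B_s$, solves it explicitly as $Y_s = (1-s)Y_0 + (1-s)\int_0^s \tfrac{1}{1-r}{\rm d}B_r$, and then bounds $W_2^2(\msf P_s,\nu) \leq \E\|Y_s - Y_0\|^2 = s^2\E\|Y_0\|^2 + ds(1-s) \leq ds$. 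You instead go forward in time through the mixture-of-Brownian-bridges representation: $\pi^\star = \delta_0 \otimes \nu$ is forced by the marginal constraint, so $X_\tau \stackrel{d}{=} \tau Y + \sqrt{\tau(1-\tau)}\,Z$ with $Z \perp Y$, and pairing $X_\tau$ with $Y$ gives $W_2^2 \leq (1-\tau)^2\E\|Y\|^2 + \tau(1-\tau)d \leq (1-\tau)d$. Setting $s = 1-\tau$ shows the two calculations are term-for-term identical, and indeed the two couplings are the same in distribution. The trade-off: the paper's argument leans on an external fact about the reverse F\"ollmer SDE (cited from F\"ollmer), whereas yours avoids stochastic calculus altogether and uses only the Brownian-bridge representation already established in \cref{prop:selfconsistency} and \cref{sec:schro_sink_back} --- arguably more self-contained within the paper's framework. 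Both are entirely sound.
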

\begin{proof}
Note that $\msf p_\tau = \msf P_{1-\tau}$, where $\msf P_{1-\tau}$ is the \emph{reverse} bridge, which starts at $\nu$ and ends at $\mu = \delta_0$.
This reverse bridge is well known to satisfy a simple SDE~\cite{Fol85}: the measure $\msf P_{1-\tau}$ is the law of $Y_{1-\tau}$, where $Y_s$ solves
\begin{equation*}
    {\rm d}Y_s = - \frac{Y_s}{1-s} {\rm d}s + {\rm d} B_s, \quad \quad Y_0 \sim \nu,
\end{equation*}
which has the explicit solution
\begin{equation*}
    Y_s = (1-s)Y_0 + (1-s) \int_0^s \frac{1}{1-r} {\rm d}B_r\,.
\end{equation*}

In particular, we obtain
\begin{align*}
     W_2^2({\msf P}_{s}, \nu) & \leq \E \|Y_s - Y_0\|^2 \\
     & = \E \left\|-s Y_0 + (1-s) \int_0^s \frac{1}{1-r} {\rm d}B_r\right\|^2 \\
     & = s^2 \E \|Y_0\|^2 + d s(1-s) \\
     & \leq d s\,,
\end{align*}
which proves the claim.
\end{proof}

\section{Technical lemmas}
\begin{lemma}[Hessian calculation and bounds]\label{lem:hessian_lemma}
Let $(\msf p_t,b_t)$ be the optimal density-drift pair satisfying the Fokker--Planck equation \eqref{eq:dyneot2} between $\mu_0$ and $\mu_1$. For $t \in [0,1)$, $b_{t}$ is Lipschitz with constant $L_{t}$ given by
\begin{align*}
    L_{t}\defeq \sup_x \| \nabla b_{t}(x)\|_{\mathrm{op}} \leq \frac{1}{(1-t)} \Bigl(1 \vee \| \nabla^2\phi_{1-t}(x) \|_{\mathrm{op}}\Bigr)\,,
\end{align*}
where  $\nabla\phi_{1-t}$ is the entropic Brenier map between $\msf p_t$ and $\mu_1$ with regularization parameter $(1-t)\eps$. Moreover, if the support of $\mu_1$ is contained in $B(0,R)$, then 
\begin{align}
    L_t \leq (1-t)^{-1}(1 \vee R^2((1-t)\eps)^{-1})\,.
\end{align}
\end{lemma}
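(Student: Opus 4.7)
The plan is to first derive an explicit formula for $\nabla b_t$ by differentiating the representation $b_t(z) = (1-t)^{-1}(-z + \nabla \phi_{1-t}(z))$ guaranteed by \cref{prop:selfconsistency}. This immediately gives
\[
\nabla b_t(z) = (1-t)^{-1}\bigl(-I + \nabla^2 \phi_{1-t}(z)\bigr),
\]
so the task reduces to bounding $\|I - \nabla^2 \phi_{1-t}(z)\|_{\mathrm{op}}$ pointwise.

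Next I would invoke \eqref{eq:enthessians} to identify
\[
\nabla^2 \phi_{1-t}(z) = \bigl((1-t)\eps\bigr)^{-1}\,\text{Cov}_{\pi_t}[Y \mid X = z],
\]
where $\pi_t$ is the entropic coupling between $\msf p_t$ and $\mu_1$ at regularization $(1-t)\eps$ (this is the entropic coupling whose Brenier map is $\nabla\phi_{1-t}$). The crucial consequence is that $\nabla^2 \phi_{1-t}(z)$ is positive semi-definite. For any PSD matrix $A$, the eigenvalues of $I - A$ are $1 - \lambda_i(A)$ with $\lambda_i(A) \geq 0$, and a short case check shows $|1 - \lambda_i(A)| \leq \max(1, \lambda_i(A))$. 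Hence $\|I - A\|_{\mathrm{op}} \leq 1 \vee \|A\|_{\mathrm{op}}$, which yields the first claimed bound
\[
L_t \leq (1-t)^{-1}\bigl(1 \vee \|\nabla^2 \phi_{1-t}(z)\|_{\mathrm{op}}\bigr).
\]

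For the second bound, I would use the support assumption $\supp(\mu_1) \subseteq B(0,R)$. The conditional law of $Y$ under $\pi_t$ given $X=z$ is supported in $B(0,R)$ because its support is contained in that of $\mu_1$. For any random vector $Y$ with $\|Y\| \leq R$ almost surely,
\[
\|\cov(Y)\|_{\mathrm{op}} \leq \|\E[YY^\top]\|_{\mathrm{op}} \leq R^2.
\]
Plugging this into the covariance representation gives $\|\nabla^2 \phi_{1-t}(z)\|_{\mathrm{op}} \leq R^2/((1-t)\eps)$ uniformly in $z$, and combining with the first part yields $L_t \leq (1-t)^{-1}\bigl(1 \vee R^2((1-t)\eps)^{-1}\bigr)$.

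There is no real obstacle here; the proof is essentially a direct unpacking of the covariance identity \eqref{eq:enthessians} together with the PSD-based elementary inequality $\|I - A\|_{\mathrm{op}} \leq 1 \vee \|A\|_{\mathrm{op}}$. The only minor subtlety is making sure that the Brenier map appearing in $b_t$ is associated with an entropic coupling whose conditional second marginal is supported on $\supp(\mu_1)$; this is exactly what \cref{prop:selfconsistency}, part 4, guarantees (in particular $\rho \equiv \mu_1$ in the optimal case), so the support bound transfers cleanly.
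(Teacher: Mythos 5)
Your proof is correct and follows essentially the same route as the paper's: differentiate the drift formula to get $\nabla b_t = (1-t)^{-1}(\nabla^2\phi_{1-t} - I)$, use \eqref{eq:enthessians} to recognize $\nabla^2\phi_{1-t}$ as a (rescaled, hence PSD) conditional covariance, deduce $\|I - A\|_{\mathrm{op}} \le 1 \vee \|A\|_{\mathrm{op}}$ for PSD $A$, and bound the conditional covariance by $R^2$ via the support assumption. Your explicit eigenvalue case-check is just a more spelled-out version of the paper's observation that $-(1-t)^{-1}I \preceq \nabla b_t \preceq (1-t)^{-1}\nabla^2\phi_{1-t}$, so the argument is the same in substance.
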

\begin{proof}
Taking the Jacobian of $b_{t}$, we arrive at
\begin{align*}
    \nabla b_{t}(x) = (1-t)^{-1}(\nabla^2\phi_{1-t}(x) - I)\,,
\end{align*}
As entropic Brenier potentials are convex (recall that their Hessians are covariance matrices; see \eqref{eq:enthessians}), we have the bounds
\begin{align*}
    -(1-t)^{-1}I \preceq \nabla b_{t}(x) \preceq (1-t)^{-1}\nabla^2\phi_{1-t}(x)\,.
\end{align*}
The first claim follows by considering the larger of the two operator norms of both sides.

The second claim follows from the fact that since $\phi_{1-t}$ is an optimal entropic Brenier potential, its Hessian is the conditional covariance of an optimal entropic coupling $\pi_t \in \Gamma(\msf p_t,\mu_1)$, so 
\begin{align*}
    \|\nabla^2\phi_{1-t}(z)\|_{\mathrm{op}} = \frac{1}{(1-t)\eps}\|\text{Cov}_{\pi_t}[Y|X_t=z]\|_{\mathrm{op}} \leq \frac{R^2}{(1-t)\eps}\,,
\end{align*}
since $\supp(\mu_1)\sse B(0,R)$.
\end{proof}

\begin{lemma}\label{lem:smoothness_lemma}
Let $(\msf p_t,b_t)$ be the optimal density-drift pair satisfying the Fokker--Planck equation \eqref{eq:dyneot2} between $\mu_0$ and $\mu_1$. Then for any $t \in [0,1)$
\begin{align*}
    \E_{\msf p_t}\|b_t\|^2 \leq \frac{\eps}{2} L_t d\,.
\end{align*}
\end{lemma}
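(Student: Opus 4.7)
The plan is to exploit the explicit representation $b_t = \eps \nabla \log h_t$ from \cref{prop:selfconsistency}, where $h_t = \cH_{(1-t)\eps}[\exp(g/\eps)\mu_1]$, together with the factorization $\msf p_t = h_t q_t$ with $q_t = \cH_{t\eps}[\exp(f/\eps)\mu_0]$. The two key ingredients are integration by parts (IBP) and the $L_t$-Lipschitz bound on $b_t = \eps \nabla \log h_t$, which supplies the pointwise operator-norm control $\|\nabla^2(\eps \log h_t)\|_{\mathrm{op}} = \|\nabla b_t\|_{\mathrm{op}} \leq L_t$.

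First, I would rewrite
\[
\E_{\msf p_t}\|b_t\|^2 = \eps^2 \int \|\nabla \log h_t\|^2 h_t q_t \,dz = \eps^2 \int \nabla \log h_t \cdot \nabla h_t \cdot q_t \,dz,
\]
using the elementary identity $\nabla h_t = h_t \nabla \log h_t$. Integration by parts in the factor $\nabla h_t$ (under the standard tail-decay assumptions at infinity guaranteed by the Gaussian convolution structure of $h_t$ and $q_t$) then gives
\[
\E_{\msf p_t}\|b_t\|^2 = -\eps \int \Delta(\eps \log h_t) \msf p_t \,dz - \int b_t \cdot d_t \,\msf p_t \,dz,
\]
where $d_t \defeq \eps \nabla \log q_t$ is the reverse-time drift associated with the bridge.

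The first term is the easy one: since $\nabla^2(\eps \log h_t) = \nabla b_t$ has operator norm at most $L_t$ pointwise, its eigenvalues all lie in $[-L_t, L_t]$, so the trace satisfies $|\Delta(\eps \log h_t)| \leq d L_t$ pointwise, yielding $-\eps \int \Delta(\eps \log h_t) \msf p_t \,dz \leq \eps d L_t$.

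The main obstacle is controlling the cross term $\int b_t \cdot d_t \,\msf p_t \,dz$. A direct application of Cauchy--Schwarz produces a circular inequality between $\E\|b_t\|^2$ and $\E\|d_t\|^2$. To close the loop I would exploit the symmetric role played by $b_t$ and $d_t$: the time-reversal of the bridge is itself the Schr\"odinger bridge between $\mu_1$ and $\mu_0$, so $d_t$ satisfies analogous smoothness bounds (via \cref{lem:hessian_lemma} applied in reverse), and the identity $b_t + d_t = \eps \nabla \log \msf p_t$ from \cref{prop:selfconsistency} relates their sum to the score of $\msf p_t$. Combining this with the elementary IBP consequence $\int b_t \cdot \nabla \log \msf p_t \,\msf p_t \,dz = -\int (\nabla \cdot b_t) \msf p_t \,dz$ lets us express the cross term in terms of the divergence alone, and absorbing half of it into the left-hand side produces the factor of $\tfrac12$ in the stated bound. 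The symmetry argument is the most delicate piece of the proof and is where the constant is pinned down.
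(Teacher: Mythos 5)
Your approach is genuinely different from the paper's, which follows the Vempala--Wibisono template: it introduces the generator $\mathcal{L}_t f = \tfrac{\eps}{2}\Delta f - \langle b_t, \nabla f\rangle$, asserts $\E_{\msf p_t}\bigl[\mathcal{L}_t(\eps\log h_t)\bigr]=0$, and reads off $\E_{\msf p_t}\|b_t\|^2 = \tfrac{\eps}{2}\E_{\msf p_t}[\nabla\cdot b_t]\le \tfrac{\eps}{2}L_t d$ in one line. You instead perform the integration by parts directly and confront the cross term with the backward drift $d_t=\eps\nabla\log q_t$ head on.

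Unfortunately the step you describe as ``absorbing half of it into the left-hand side'' does not close. Carry out your own plan explicitly: substitute $d_t = \eps\nabla\log\msf p_t - b_t$ and integrate by parts using $\int b_t\cdot\nabla\log\msf p_t\,\msf p_t = -\E_{\msf p_t}[\nabla\cdot b_t]$, which gives $-\int b_t\cdot d_t\,\msf p_t = \eps\,\E_{\msf p_t}[\nabla\cdot b_t] + \E_{\msf p_t}\|b_t\|^2$. Feeding this back into your first display and using $\nabla\cdot b_t = \eps\Delta\log h_t$ yields
\begin{equation*}
\E_{\msf p_t}\|b_t\|^2 \;=\; -\eps\,\E_{\msf p_t}\bigl[\Delta(\eps\log h_t)\bigr] + \eps\,\E_{\msf p_t}[\nabla\cdot b_t] + \E_{\msf p_t}\|b_t\|^2 \;=\; \E_{\msf p_t}\|b_t\|^2\,,
\end{equation*}
a tautology --- the two divergence terms cancel exactly and $\E\|b_t\|^2$ drops out of both sides, leaving no inequality at all. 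The score identity $b_t + d_t = \eps\nabla\log\msf p_t$ combined with IBP simply re-derives the original IBP expansion; it does not produce a factor of $\tfrac12$, and the ``symmetry'' of $b_t$ and $d_t$ supplies no further leverage here. This is the gap: the cross term is not subordinate to the divergence term, and your chain terminates without a bound.

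A separate caution worth flagging: the stated inequality is quite delicate. For the F\"ollmer bridge from $\delta_0$ to $\cN(0,\sigma^2 I)$ with $\eps=1$ one has $b_t(z)=\beta_t z$ with $\beta_t = (\sigma^2-1)/(1+t(\sigma^2-1))$, $\msf p_t=\cN(0,\sigma_t^2 I)$ with $\sigma_t^2 = t^2\sigma^2+t(1-t)$, and $L_t = \beta_t$, so $\E_{\msf p_t}\|b_t\|^2\big/\bigl(\tfrac{\eps}{2}L_t d\bigr) = 2t(\sigma^2-1)$, which exceeds $1$ whenever $t(\sigma^2-1)>\tfrac12$. So whatever proof one writes cannot be a bare integration-by-parts argument; it must invoke structure (e.g.\ boundedness of $\supp\nu$, the specific form of $L_t$ from \cref{lem:hessian_lemma}) that is not yet visible in either your sketch or the lemma statement as written.
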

\begin{proof}
This proof follows the ideas of \citet[Lemma 9]{vempala2019rapid}. We note that the generator given by the forward Schr{\"o}dinger bridge with volatility $\eps$ is 
\begin{align*}
    \mathcal{L}_t f = \frac{\eps}{2}\Delta f - \langle b_t, \nabla f\rangle\,,
\end{align*}
for a smooth function $f$. Writing $b_t = \nabla(\eps \log \cH_{1-t}[e^{g/\eps}\mu_1])$, we obtain
\begin{align*}
    0 = \E_{\msf p_t}\mathcal{L}_t( \eps \log \cH_{1-t}[e^{g/\eps}\mu_1]) \implies \E_{\msf p_t}\|b_t(X_t)\|^2 = \frac{\eps}{2} \E_{\msf p_t}[ \nabla \cdot b_t] \leq \frac{\eps}{2}L_t d\,.
\end{align*}
\end{proof}

\begin{lemma}\citep[][Proposition 3.1]{stromme2023minimum}\label{prop:entropic_dual_grad}
Let $P, Q$ be probability measures on $\R^d$.
For every pair $h_1 = (f_1, g_1) \in L^{\infty}(P)
    \times L^{\infty}(Q)$, there exists an 
    element of $L^{\infty}(P) \times L^{\infty}(Q)$
    which we denote
    by $\nabla \Phi_\eps^{PQ}(f_1, g_1)$ such that
    for all $h_0 = (f_0, g_0) \in L^{\infty}(P) \times L^{\infty}(Q)$,
    \begin{align*}
    \langle \nabla \Phi_\eps^{PQ}(h_1), h_0 \rangle_{L^2(P)\times L^2(Q)}
    &= \int f_0(x) \Big(1 - \int e^{-\eps^{-1}(c(x, y) - f_1(x) - g_1(y))}
    \dd Q(y) \Big) \dd P(x) \\
    &+
     \int g_0(y) \Big(1 - \int e^{-\eps^{-1}(c(x, y) - f_1(x) - g_1(y))}
    \dd P(x) \Big) \dd Q(x).
    \end{align*}
    In other words, the gradient of $\Phi_\eps^{PQ}$
    at $(f_1, g_1)$ is
    the marginal error corresponding to $(f_1, g_1)$.
\end{lemma}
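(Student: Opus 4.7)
The plan is to establish the formula by computing the Gâteaux (directional) derivative of the dual objective $\Phi_\eps^{PQ}$ at $(f_1,g_1)$ in the direction $(f_0,g_0)$ and then reading off the gradient element from the resulting expression. Concretely, $\Phi_\eps^{PQ}$ has the form $\int f \, \dd P + \int g \, \dd Q - \eps \iint e^{(f(x)+g(y)-c(x,y))/\eps} \dd P(x)\dd Q(y)$ up to an additive constant independent of $(f,g)$ (the $\Lambda_\eps$ factor and the ``$+1$'' in the exponent drop out of any derivative). I will set $\varphi(t) \defeq \Phi_\eps^{PQ}(f_1 + t f_0, g_1 + t g_0)$ and compute $\varphi'(0)$.

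The linear terms $\int (f_1 + t f_0)\,\dd P$ and $\int (g_1 + t g_0)\,\dd Q$ are affine in $t$, so their derivatives at $0$ contribute exactly $\int f_0\,\dd P$ and $\int g_0\,\dd Q$. For the exponential term, formal differentiation under the integral gives $-\iint e^{(f_1(x)+g_1(y)-c(x,y))/\eps}(f_0(x)+g_0(y))\,\dd P\,\dd Q$. To justify the exchange, I will use dominated convergence: for $|t|\le 1$, the integrand is bounded pointwise by $\eps^{-1}(\|f_0\|_\infty+\|g_0\|_\infty)\,\exp(\eps^{-1}(\|f_1\|_\infty+\|g_1\|_\infty+\|f_0\|_\infty+\|g_0\|_\infty))$ (using $c\ge 0$, which holds for the quadratic cost, and more generally whenever $c$ is bounded below), and this constant is integrable against the probability measure $P\otimes Q$. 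An application of Fubini then splits the double integral into the $f_0$-part and the $g_0$-part, producing
\begin{align*}
\varphi'(0) &= \int f_0(x)\Bigl(1 - \int e^{-\eps^{-1}(c(x,y)-f_1(x)-g_1(y))}\,\dd Q(y)\Bigr)\dd P(x) \\
&\quad + \int g_0(y)\Bigl(1 - \int e^{-\eps^{-1}(c(x,y)-f_1(x)-g_1(y))}\,\dd P(x)\Bigr)\dd Q(y),
\end{align*}
which is the stated marginal-error expression.

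It remains to exhibit the representing pair $\nabla\Phi_\eps^{PQ}(f_1,g_1)$ in $L^\infty(P)\times L^\infty(Q)$. I take its two components to be the functions in parentheses above, namely $x\mapsto 1 - \int e^{-\eps^{-1}(c(x,y)-f_1(x)-g_1(y))}\,\dd Q(y)$ and $y\mapsto 1 - \int e^{-\eps^{-1}(c(x,y)-f_1(x)-g_1(y))}\,\dd P(x)$. The same uniform bound used above ($c\ge 0$ and $f_1,g_1\in L^\infty$) shows that each integral is dominated by $e^{\eps^{-1}(\|f_1\|_\infty+\|g_1\|_\infty)}$, so both components are essentially bounded; hence they lie in $L^\infty\subset L^2$ and the pairing $\varphi'(0) = \langle \nabla\Phi_\eps^{PQ}(h_1), h_0\rangle_{L^2(P)\times L^2(Q)}$ is legitimate.

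The main obstacle is purely technical: making the differentiation under the integral rigorous and certifying $L^\infty$-membership of the gradient. Both reduce to the same uniform exponential bound provided by the $L^\infty$ hypothesis on $(f_1,g_1)$ together with the lower bound on $c$. There is no variational or geometric difficulty — the lemma is an unpacking of the definition of $\Phi_\eps^{PQ}$, and the interpretation of the gradient as the primal marginal error (consistent with the fact that first-order optimality recovers the Schrödinger system) is a free byproduct of the computation.
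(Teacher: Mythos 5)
The paper does not prove this lemma; it is imported verbatim from \citet[Proposition~3.1]{stromme2023minimum}, so there is no ``paper's own proof'' to compare against. Your argument is the standard and essentially correct way to establish the result: compute the Gâteaux derivative $\varphi'(0)$ of the dual functional along $h_0$, justify differentiation under the integral sign by dominating the $t$-derivative of the exponential term by a constant (using $f_1,g_1,f_0,g_0\in L^\infty$ and $c\ge 0$) that is integrable against the product probability measure, split by Fubini, and read off the representing pair, whose $L^\infty$-membership follows from the same exponential bound. One remark is imprecise, though: you claim the $\Lambda_\eps$ factor ``drops out of any derivative.'' It would not---$\Lambda_\eps$ is a \emph{multiplicative}, not additive, constant, so if $\Phi_\eps^{PQ}$ carried it, the gradient formula would carry it too. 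The actual resolution is notational: the functional $\Phi_\eps^{PQ}$ as defined by Str\"omme (unlike $\Phi^{\mu\nu}$ in Section~2.1 of this paper) does not include the $\Lambda_\eps$ normalization, which is why the stated gradient formula has none; equivalently, one can absorb $\eps\log\Lambda_\eps$ into a translation of the potentials, a transformation that genuinely is an additive constant. This does not affect the validity of your computation, which is carried out with the convention matching the lemma's statement.
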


\begin{lemma}\label{lem:dual_gradient_bound}
Following \cref{prop:entropic_dual_grad}, suppose $P = \mu$ and $Q = \nu_n$, where $\nu_n$ is the empirical measure of some measure $\nu$ on the basis of $n$ i.i.d.~samples. Let $(f,g)$ be the optimal entropic potentials between $\mu$ and $\nu$, which induce an optimal entropic coupling $\pi$ (recall \eqref{eq:primal_dual}). Then
\begin{align*}
    \E\|\nabla\Phi^{\mu\nu_n}(f,g)\|^2_{L^2(\mu)\times L^2(\nu_n)} \lesssim \frac{\|\gamma\|_{L^2(\mu\otimes \nu)}^2}{n}\,,
\end{align*}
where the expectation is with respect to the data, and $\gamma = \frac{\dd \pi}{\dd (\mu \otimes \nu)}$.
\end{lemma}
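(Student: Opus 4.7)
The plan is to apply the explicit gradient formula from \cref{prop:entropic_dual_grad} directly, then exploit the fact that $(f,g)$ satisfy the \emph{population} Schr\"odinger marginal equations to kill one of the two gradient components entirely. The remaining component is a centered empirical average, whose second moment reduces to a textbook variance-of-empirical-mean calculation.

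First I would use \cref{prop:entropic_dual_grad} with $P = \mu$, $Q = \nu_n$ to identify the two coordinates of $\nabla \Phi^{\mu\nu_n}(f,g) \in L^2(\mu) \times L^2(\nu_n)$ as
\begin{align*}
\alpha(x) = 1 - \frac{1}{n}\sum_{j=1}^n \gamma(x, Y_j), \qquad \beta(y) = 1 - \int \gamma(x, y)\, \dd\mu(x),
\end{align*}
where $\gamma(x,y) = \dd\pi/\dd(\mu\otimes\nu)$ as in \eqref{eq:primal_dual} (absorbing the $\Lambda_\eps$ normalization into $(f,g)$). Because $(f,g)$ are optimal for the \emph{population} problem $\OTeps(\mu,\nu)$, the Schr\"odinger system forces $\int \gamma(x,y)\, \dd\mu(x) = 1$ for $\nu$-a.e.\ $y$. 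Since $Y_1, \ldots, Y_n$ are i.i.d.\ draws from $\nu$, this equality holds at every sample almost surely, so $\beta \equiv 0$ in $L^2(\nu_n)$ and only $\alpha$ contributes to the squared norm.

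For the surviving component, the other marginal equation gives $\E[\gamma(x, Y_1)] = \int \gamma(x,y)\,\dd\nu(y) = 1$ for $\mu$-a.e.\ $x$. Thus, for fixed $x$, $\alpha(x)$ is the deviation of an i.i.d.\ mean from its expectation, and
\begin{align*}
\E[\alpha(x)^2] = \frac{1}{n}\,\mathrm{Var}_\nu\bigl(\gamma(x,\cdot)\bigr) \leq \frac{1}{n}\int \gamma(x,y)^2\,\dd\nu(y).
\end{align*}
Integrating over $x \sim \mu$ and applying Fubini yields
\begin{align*}
\E \|\nabla\Phi^{\mu\nu_n}(f,g)\|_{L^2(\mu)\times L^2(\nu_n)}^2 = \E \|\alpha\|_{L^2(\mu)}^2 \leq \frac{\|\gamma\|^2_{L^2(\mu\otimes\nu)}}{n},
\end{align*}
which is the claimed bound.

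The only step requiring care is the vanishing of $\beta$ on the empirical measure: it rests on $(f,g)$ being the \emph{population} optimizers (not the Sinkhorn output on the sample) and on the almost-sure statement that i.i.d.\ draws from $\nu$ land on the set where the marginal equation holds. Beyond that subtlety and a bookkeeping check that the $\Lambda_\eps$ normalization is consistent between \eqref{eq:primal_dual} and the formulation of \cref{prop:entropic_dual_grad}, I do not expect any serious obstacle.
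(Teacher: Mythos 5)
Your proof is correct and follows essentially the same route as the paper: identify the two gradient coordinates via \cref{prop:entropic_dual_grad}, kill the $\nu_n$-component by the population marginal equation $\int \gamma(\cdot,y)\,\dd\mu = 1$ for $\nu$-a.e.\ $y$, and reduce the surviving $\mu$-component to an i.i.d.\ variance computation giving $\tfrac{1}{n}\mathrm{Var}_{\mu\otimes\nu}(\gamma) \leq \|\gamma\|^2_{L^2(\mu\otimes\nu)}/n$. Your explicit note that the vanishing of $\beta$ relies on $(f,g)$ being population optimizers and on the almost-sure inclusion of the $Y_j$ in the full-measure set where the marginal equation holds is a point the paper states more tersely but is accounting for the same fact.
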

\begin{proof}
Writing out the squared-norm of the gradient explicitly in the norm $L^2(\mu) \times L^2(\nu_n)$, we obtain
\begin{align*}
    \E\|\nabla\Phi^{\mu\nu_n}(f,g)\|^2_{L^2(\mu) \times L^2(\nu_n)} &= \E \int \Bigl(\frac{1}{n}\sum_{j=1}^n \gamma(x,Y_j) - 1 \Bigr)^2 \mu({\rm d}x) \\
    &\qquad + \E \frac{1}{n} \sum_{j=1}^n \Bigl(\int \gamma(x,Y_j) \mu({\rm d}x) - 1 \Bigr)^2\,.
\end{align*}
Note that by the optimality conditions, $\int \gamma(x,Y_j) \mu({\rm d}x) = 1$ for all $Y_j$. Thus, writing $Z_j \defeq \gamma(x,Y_j)$ which are i.i.d., we see that 
\begin{align*}
    \E \int \Bigl(\frac{1}{n}\sum_{j=1}^n \gamma(x,Y_j) - 1 \Bigr)^2 \mu({\rm d}x) &= \int \E\Bigl(\frac{1}{n}\sum_{j=1}^n(Z_j - \E[Z_j])\Bigr)^2 \\
    &= \text{Var}_{\mu \otimes \nu}\Bigl(\frac{1}{n}\sum_{j=1}^n Z_j\Bigr)\\
    &= \frac{1}{n}\text{Var}_{\mu\otimes \nu}(Z_1)\,.
\end{align*}
The remaining component of the squared gradient vanishes, and we obtain 
\begin{align*}
    \E\|\nabla\Phi^{\mu\nu_n}(f,g)\|^2_{L^2(\mu) \times L^2(\nu_n)} = \frac{1}{n}\text{Var}_{\mu\otimes \nu}(\gamma) \leq \frac{\|\gamma\|_{L^2(\mu\otimes\nu)}^2}{n}\,.
\end{align*}
\end{proof}

\end{document}